\theoremstyle{definition}
\newtheorem{definition}{Definition}[section]
\DeclareMathOperator*{\aff}{aff}
\newtheorem{theorem}{Theorem}
\newtheorem{lemma}[theorem]{Lemma}
\newcommand{\comm}[1]{\textcolor{blue}{#1}}
\ificcvfinal\pagestyle{empty}\fi
\begin{document}

\title{Reachability Analysis of Convolutional Neural Networks}


\author{Xiaodong Yang, Tomoya Yamaguchi, Hoang-Dung Tran \\ Bardh Hoxha, Taylor T Johnson and Danil Prokhorov}

\maketitle
\ificcvfinal\thispagestyle{empty}\fi

\begin{abstract}
Deep convolutional neural networks have been widely employed as an effective technique to handle complex and practical problems. However, one of the fundamental problems is the lack of formal methods to analyze their behavior. To address this challenge, we propose an approach to compute the exact reachable sets of a network given an input domain, where the reachable set is represented by the face lattice structure. Besides the computation of reachable sets, our approach is also capable of backtracking to the input domain given an output reachable set. Therefore, a full analysis of a network's behavior can be realized. In addition, an approach for fast analysis is also introduced, which conducts fast computation of reachable sets by considering selected sensitive  neurons in each layer. The exact pixel-level reachability analysis method is evaluated on a CNN for the CIFAR10 dataset and compared to related works. The fast analysis method is evaluated over a CNN CIFAR10 dataset and VGG16 architecture for the ImageNet dataset. 

\end{abstract}


\section{Introduction}
%
Applications of Convolutional Neural Networks (CNNs) to safety-critical systems with learning-enabled components such as autonomous vehicles~\cite{kocic2019end,tian2018deeptest} and medical devices~\cite{turan2018deep,shvets2018automatic} have increased substantially.  However, neural networks are vulnerable to adversarial attacks~\cite{carlini2017towards,42503}. It has been shown that CNN performance is often susceptible to pixel-level variations in images, and a variety of methods have been developed to interpret the classifier prediction and understand the pixel-level fragility~\cite{modas2019sparsefool,su2019one,su2019attacking}. Many methods~\cite{huang2020one,zintgraf2017visualizing,bach2015pixel,montavon2018methods,samek2016evaluating} were proposed to interpret networks within the pixel level. To alleviate this problem, we take inspiration from a widely used approach in formal methods called reachability analysis. 
In the simplest form, reachability analysis refers to the process of computing the reachable state of a system given some input range and initial conditions. 
For image classification, reachability analysis may be used to find the reachable output sets with respect to an input domain. 
If reachable sets spans over multiple classifications, then the network may not be safe.

Recently, there has been significant research on the assessment of robustness of neural networks for safety verification. Most focus is directed toward feed-forward neural networks (FNN). They are mainly based on \textit{reachability}~\cite{gehr2018ai2,xiang2018output,tran2019star, yang2020reachability, xiang2017reachable}, \textit{optimization}~\cite{bastani2016measuring, lomuscio2017approach, raghunathan2018certified, dvijotham2018dual, tjeng2018evaluating, wong2018provable}, and \textit{search}~\cite{wang2018formal,huang2017safety,weng2018towards,katz2017reluplex, ehlers2017formal, bunel2018unified, dutta2018output}. Compared to FNNs, CNNs have more complex architectures with convolutional layers and max pooling layers. Various types of layers and deep architecture make the development of efficient and robust verification methods a challenging problem. Several approaches have been proposed for verification of CNNs~\cite{anderson2019optimization,katz2019marabou, kouvaros2018formal, ijcai2019-824, singh2018fast, singh2019abstract,singh2018boosting,tran2020cav, NEURIPS2020_f6c2a0c4}. However, few can handle real-world networks such as VGGNets~\cite{Simonyan15}. 

In this work, we propose a reachability analysis method that computes reachable sets of CNNs w.r.t. an input image domain. 
The framework enables practitioners to generate formal guarantees on the robustness of the image classifier with respect to pixel-level perturbations. 
We use the face lattice structure to compute and propagate the reachable set for each layer of the network until an output set is generated. 
In addition to the computation of reachable sets, our method also supports backtracking to the input domain given a subset of the reachable set. 
This is useful for debugging, since it provides information on the pixels and input ranges which are causing misclassification. 
Since formal verification of neural networks is an NP-Complete problem~\cite{katz2017reluplex}, we also provide a parametric, sound, under-approximation technique which efficiently approximates the reachable set of a CNN. 
This method can be tuned using a relaxation parameter in order to balance between computation speed and completeness. 
This technique is then utilized to present a falsification method for finding adversarial attacks. 
We demonstrate our approach on a CNN for the CIFAR10 dataset and a VGG16 network for the ImageNet dataset. 
We compare against state-of-the-art methods~\cite{singh2018fast,singh2018boosting,singh2019abstract,singh2019beyond,tran2020nnv} and demonstrate improvement in completeness and computation time.


The main contributions of the paper are as follows. First, we present a reachable set formulation using the notion of a face lattice. In order to conduct reachability analysis, several operations over the face lattice structure are presented. These are used to develop a method for reachability analysis of CNNs, which include ReLU and max-pooling layers. In addition, a backtracking method is presented for finding \textit{problematic} sets in the input range that are responsible for misclassification. Second, a fast under-approximation method for the reachable set is presented which allows the practitioner to select the trade-off between computation speed and completeness. Third, our reachability analysis method can be used to evaluate the robustness of networks trained with different defense methods, which is demonstrated in the experiments.



\section{Preliminaries}
The architecture of CNNs usually consist of multiple layers including convolutional layers ($L_c$), max pooling ($L_m$), batch-normalization ($L_b$), ReLU activation ($L_r$) and feed-forward layers ($L_l$). 
Let $\mathcal{N}$ denote a CNN with $n$ layers and $L_i$ be its $i$th layer where $i\in [1,2,\dots,n]$. Then, a CNN can be formulated as:
\begin{equation}
  \mathcal{N} = L_1 \otimes L_2 \otimes \dots \otimes L_n
\end{equation}
where $L_i \in \{L_c, L_m, L_b, L_r, L_l\}$, and $\otimes$ denotes the connection between adjacent layers. Inputs to the network will be sequentially processed by each layer.
The reachability analysis of neural networks refers to the computation of output reachable sets for the network given an input set. A \textit{reachable set} refers to a domain where all the element points are reachable. This set is usually represented using linear constraints or a geometric objects such as convex polytopes~\cite{xiang2018output}, Star set~\cite{tran2019star}, Zonotope~\cite{tran2019star,singh2018fast} and Abstract domain~\cite{singh2019abstract}. 
The reachable-set computation of the network $\mathcal{N}$ can be computed as follows:
\begin{equation}
  \mathcal{R}_{\mathcal{N}} = \mathcal{R}_n(\mathcal{R}_{n-1}(\dots\mathcal{R}_1(s)))
  \label{equ:pre2}
\end{equation}
where $s_{out} = \mathcal{R}_i (s_{in})$ with the input sets $s_{in}$ and their output sets $s_{out}$. The reachable set of one layer is used as an input to the next layer.

Depending on the type of the layer, the function $\mathcal{R}(\cdot)$ has two different operations. First, an \textit{affine transformation} operation is applied in layers $L_c$, $L_b$ and $L_l$. This operation linearly transforms element points of an input set with weights and bias (convolutional layer) or the mean and standard deviation (batch normalisation layer). Second, the input-domain division operation of the $max$ function is applied in layers $L_r$ and $L_m$. The $max$ function exhibits an unique linearity over each domain. For instance, the ReLU activation function $y\text{=}max(0,x)$ has two different input ranges $x<0$ and $x\geq0$ over which the output of the $max$ function has different linearities over input $x$. This is a source on non-linearity in CNNs and makes the  set computation challenging.

Recently, various approaches have been proposed to handle such challenges and they can be classified in two main categories. One is an over-approximation based approach, such as Zonotope and Abstract domain, and the other one is an exact-analysis approach such as Polytope, Star set and ImageStar. The approximation methods normally apply a geometric object such as Zonotope to approximate the $max$ function, which enables the simple representation of the nonlinearity of the $max$ function.
This simplification results in very efficient algorithms, however, the over-approximation bound may be large, since the approximation error w.r.t the $max$ function is accumulated in an exponential manner. 
The exact-analysis methods separately consider the divided input domains of the $max$ function. 
These types of methods compute the exact reachable sets and can guarantee a complete and sound verification of networks. But the number of reachable sets increases exponentially w.r.t. the neurons, yielding an expensive computation process. Several set representations have been utilized by other works~\cite{xiang2018output,tran2019star,tran2020cav}, aiming to simplify this process. However, these approaches do not scale very well for verification of deep neural networks. Here we propose a novel representation of sets using the face lattice structure which can exhibit a significantly higher efficiency compared to the aforementioned methods. 

\section{Reachable Set Representation}
In this section, we introduce the notion of a face lattice~\cite{henk200416,grunbaum2013convex} and show how it can be used for constructing reachable sets. Its properties make it particularly efficient for the computation of reachable sets for CNNs. 

\subsection{Face Lattice Structure}
\label{section: fl}
The face lattice of a set is a complete combinatorial structure that contains all its faces and partially orders them by face containment. A 3-dimensional tetrahedron and its face lattice is shown in (a) and (b) of Fig.~\ref{fig:combine}. Its faces are organized in terms of their dimensions. The highest-dimensional face $3\textbf{-}face$ is the tetrahedron itself. there are a total of 15 faces described by blue blocks and their dimension ranges from 0 to 3. The downwards-path along the solid line from one face to one of its adjacent faces represents their containment, which is transitive. 
For instance, the $2\textbf{-}$face: {1} which is the $plane_{2,3,4}$ contains the $1\textbf{-}$face: {2} which is the ${edge}_{2,4}$.
The indices of the $0\textbf{-}$face correspond to the vertices. Given a set $S$, let $\mathcal{L}$ denote its face lattice and $V$ denote its vertices' values, then $S$ is represented by 
\begin{equation}
    S = \langle \mathcal{L}, V\rangle
    \label{equ:set}
\end{equation}

The related concepts of \textit{supporting hyperplane} and \textit{faces} of a set are defined as follows. Other geometric details on the face lattice representation can be found in~\cite{henk200416}.
\begin{definition}[Supporting Hyperplane]
    A hyperplane $a^{\top}x+b=0$ denoted by $\mathcal{H}$ is a \textit{supporting hyperplane} of a $d$-dimensional bounded set $S$ in $\mathbb{R}^d$ space, if one of its closed halfspaces, ${a}^{\top}{x}+b\le 0$ or ${a}^{\top}{x}+b\ge 0$ contains $S$ while $S$ has at least one boundary point on the $\mathcal{H}$.
    \label{def:shyperplane}
\end{definition}

\begin{definition}[Face of a set]
    A face of a set S is an intersection of S with a \textit{supporting hyperplane} $\mathcal{H}$. When the dimension of $\aff(S\cap \mathcal{H})$ is $k$, the face is denoted as $k\textbf{-}f$ or $k\textbf{-}face$. Each face is itself a set. 
    The function $\aff(*)$ indicates the smallest affine set containing $*$. 
    A $d$-dimensional set contains a set of $0\textbf{-}$faces, $1\textbf{-}$faces, $\dots$, $(d\text{-}1)\textbf{-}$ faces as well as itself. 
    \label{def:face}
\end{definition}


\subsection{Affine Transformation}

\textit{Affine transformation} operations are very common in computing reachable sets. One of the advantages of the face lattice structure is that affine transofrmations only change the values of the vertices while preserving the face lattice structure~\cite{henk200416,grunbaum2013convex}. 
An affine transofrmation to a set $S$ is applied as follows. Given a set $S=\langle \mathcal{L}, V\rangle$, and an \textit{affine transformation} with a weight matrix $W$ and a bias vector $b$, then the output set $S'=\langle \mathcal{L}', V'\rangle$ is computed by
\begin{equation}
    \mathcal{L}'=\mathcal{L}, \ \ \ V'=WV + b
    \label{equ:4}
\end{equation}


 We note that an \textit{affine transformation} may project a set into a higher-dimensional or lower-dimensional space. 
 Projection onto higher dimensional space normally happens in the convolutional layer. On the other hand, projection onto lower dimensional space commonly happens in the linear layer where neurons are fully connected. 
 In this case, the dimension of the set will be reduced to the dimensions of the target space accordingly.
 When the set is affine transformed to a lower dimension, its face lattice is preserved as described in Eq.~\ref{equ:4}. The faces whose dimensions are higher than the target do not influence the computation of reachable sets, but the face lattice including all faces together will be processed for future operations, such that the geometric information from the previous layer is maintained.
 
 
\subsection{Split Operation}
\label{section:splitting}

The split operation is applied when a set spans over multiple input domains. For instance, when dealing with the $max$ function, the set is split in two in order to capture the two different linearities. 
For a general $max$ function $y = max\{x_1, x_2, \dots, x_l\}$, we have that $y=x_k$ if and only if $\forall x_i, i\neq k$, $x_k-x_i \geq 0$. 
We note that the input domain over which $x_k$ is the maximum is characterized by a set of linear constraints, and that input domains specifying different $x$s as the maximum are adjacent and divided by hyperplanes. 
In exact reachability analysis, a set will need to be split when it spans such domains. 
This case can be determined by inspecting whether the set intersects with those hyperplanes. 
In practice, making this determination is the most common and challenging issue in the exact reachability analysis of DNNs. It's usually handled by linear-optimization or similar approaches~\cite{katz2019marabou,kouvaros2018formal,tran2020cav,singh2018boosting,singh2019abstract}, which are computational expensive due to the high dimension of the sets and the large amount of neurons. The face lattice structure does not require such optimization and is therefore much more efficient. In order to formally define the split process, we first provide the following definitions. 

\begin{definition}[Vertex Types]
    When splitting a set $S$ with a hyperplane $\mathcal{H}$, a vertex is named \textit{positive} and denoted as $v^{+}$ if it is located in the closed halfspace $\text{a}^{\top}\text{x}+b > 0$. A vertex is named \textit{negative} and denoted as $v^{-}$ if it is located in the closed halfspace $\text{a}^{\top}\text{x}+b<0$. A vertex is \textit{zero} and denoted as $v^{0}$ if it locates on the hyperplane $\text{a}^{\top}\text{x}+b=0$ 
    \label{def:vertex_type}
\end{definition}

\begin{definition}[Set Types]
    A set is named \textit{positive} and denoted as $S^{+}$ if it has no $v^{-}$s w.r.t. the hyperplane $\mathcal{H}$, and is named \textit{negative} and denoted as $S^{-}$ if it has no $v^{+}$s w.r.t. the hyperplane $\mathcal{H}$.
    \label{def:set_type}
\end{definition}

Given a $\mathcal{H}:\text{a}^{\top}\textbf{x}+b=0$ and a set $S$, the split operation starts by determining whether there is an intersection between $H$ and $S$.
As the face lattice contains vertices of the set, this determination can be simplified to finding the distribution of vertices on sides of $\mathcal{H}$. This is achieved by substituting the $\textbf{x}$ in $\text{a}^{\top}\textbf{x} + b$ with vertex values. In terms of the vertex's distribution, the vertex type is defined in Definition~\ref{def:vertex_type}. Based on these types, set types are also defined in Definition~\ref{def:set_type}. 
Overall, there are three cases to consider:
\begin{enumerate}
  \item[1)] The hyperplane $\mathcal{H}$ intersects with $S$, where $S$ has both $v^{+}$s and $v^{-}$s w.r.t. to $\mathcal{H}$. In this case, $S$ is split into two non-empty subsets $S_1^{+}$ and $S_2^{-}$.
  \item[2)] The hyperplane $\mathcal{H}$ doesn't intersect with $S$, where $S$ doesn't have any $v^{-}$s w.r.t. to $\mathcal{H}$. In this case, $S$ itself is \textit{positive} w.r.t. the $\mathcal{H}$. 
  \item[3)] The hyperplane $\mathcal{H}$ doesn't intersect with $S$, where $S$ doesn't have any $v^{+}$s w.r.t. to $\mathcal{H}$. In this case, $S$ itself is \textit{negative} w.r.t. the $\mathcal{H}$. 
\end{enumerate}


In the following, we focus on the first case as the last two cases can be easily processed. The split process in the first case includes four steps as illustrated in Fig.~\ref{fig:combine}. They are:~1) identification of faces in each dimension that intersects with $\mathcal{H}$,~2) derivation of the face lattice structure that consists of the new faces generated from the intersection of $\mathcal{H}$ with those faces obtained from Step 1,~3) splitting of the original face lattice into two sub-structures according to the types of its vertices,~4) merging of the structure generated from Step 2 respectively with the two sub-structures from Step 3 and forming the final subsets in 
face lattice $S_1^{+}$ and $S_2^{-}$.

\begin{figure}[ht]
 \includegraphics[scale = 0.45]{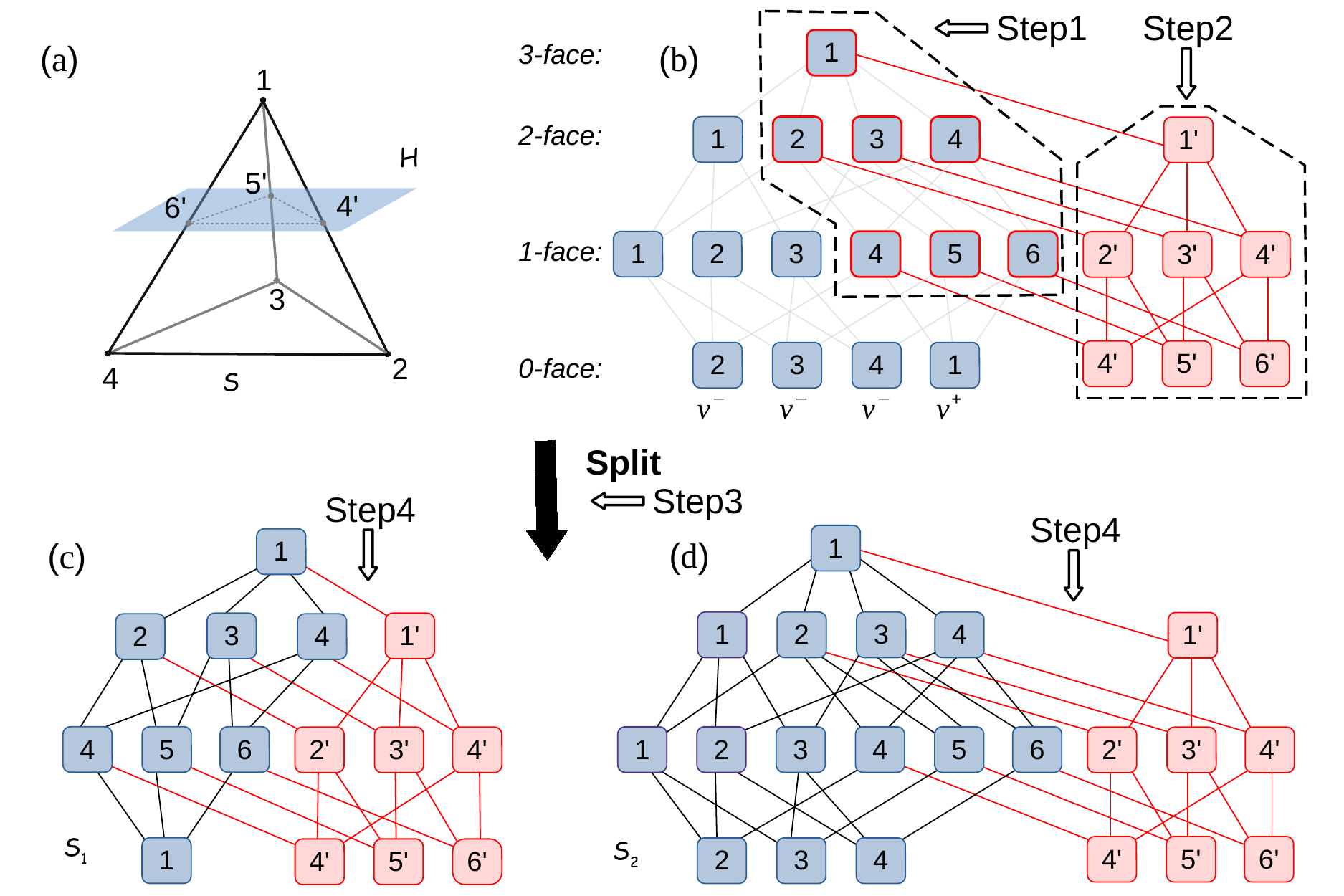}
 \caption{A demonstration of the split process of the face lattice of a tetrahedron $P$ with a hyperplane $\mathcal{H}$. In (b), the blocks filled with blue and connected by solid lines is the face lattice. The connection by solid lines represents face containment. The blocks with red frames and blue color are the faces intersecting with $\mathcal{H}$ in $S$, which are extracted from the listed $positive$ and $negative$ vertices in Step 2. The new face lattice generated in Step 2 is in red. In $(c)$ and $(d)$, the sub-face lattices consisting of the blue blocks are split from Step 3, and the whole structures with the red blocks represent the final face lattice. The one in (c) represents the \textit{positive} set and the one in (d) represents the \textit{negative} set.}
 \label{fig:combine}
\end{figure}

The identification of faces in Step 1 is based on Lemma \ref{lemma:intersect}, with which we can identify different dimensional faces that intersect with the hyperplane from the lower dimension to the higher along the containment relation. The identification starts with the $1\textbf{-}faces$ (edges). A $1\textbf{-}face$ is said to be intersecting with $\mathcal{H}$ if it contains a $v^{+}$ and a $v^{-}$. Based on these $1\textbf{-}faces$, we search faces along the upward path to the next upper dimension, and repeat this process until we reach the highest-dimensional faces. This ensures all faces are searched. This process is demonstrated in Fig.~\ref{fig:combine} (b). 


The derivation of the new face structure in Step 2 is based on Lemma~\ref{lemma:2}. The intersection of $\mathcal{H}$ with each face identified in Step 1 generates a new face. 
Additionally, their containment relation can also be inherited from the original faces based on Lemma~\ref{lemma:2}. This process is illustrated in Fig.~\ref{fig:combine} (b), where the red solid lines represent the inherited containment relations.
From another perspective, this derivation is essentially equivalent to a duplication of a sub-face lattice that only contains the faces that intersect with $\mathcal{H}$ in the original structure. 

The split operation in Step 3 separates the face lattice into two sub-face lattices according to the type of their vertices. 
Different dimensional faces belonging to one sub-face lattice are searched starting from the vertices along the containment relation up to the highest-dimensional faces. 
With $v^{+}\text{:}\{1\}$ and $v^{-}\text{:}\{2,3,4\}$, we can collect two sub-face lattices which only consists of blue blocks as shown in (c) and (d). In terms of containment relations inherited in Step 2, the sub-face lattices from Step 3 with the new face lattice from Step 2 are merged, generating the final two subsets $S_{1}^{+}$ and $S_{2}^{-}$, which is Step 4.


\begin{lemma}
Given a set $S$ in the face lattice structure and a hyperplane $\mathcal{H}$, if $\mathcal{H}$ intersects with a $k\textbf{-}face \subseteq S$, then the $\mathcal{H}$ also intersects with all $ (k\text{+}1)\textbf{-}faces$ where $ (k\text{+}1)\textbf{-}faces \subseteq S$ and $k\textbf{-}face \subseteq (k\text{+}1)\textbf{-}faces$.
\label{lemma:intersect}
\end{lemma}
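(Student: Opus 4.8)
The plan is to reduce the statement to a monotonicity property of a single sign condition, so that no explicit geometry of the intersection set is needed. First I would fix the working definition of ``intersects'' that is consistent with the edge case stated in the text: a face $F \subseteq S$ intersects $\mathcal{H}: a^{\top}x + b = 0$ precisely when $F$ contains at least one positive vertex $v^{+}$ and at least one negative vertex $v^{-}$. I would then justify that this combinatorial condition coincides with the geometric one --- that $F$ has points strictly on both sides of $\mathcal{H}$ --- by invoking the linearity of the functional $x \mapsto a^{\top}x + b$: its extrema over the polytope $F$ are attained at vertices of $F$, so $F$ takes a strictly positive (resp.\ negative) value somewhere iff some vertex of $F$ is a $v^{+}$ (resp.\ $v^{-}$). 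By convexity this two-sidedness is also equivalent to $\mathrm{relint}(F) \cap \mathcal{H} \neq \emptyset$, the transversal intersection actually exploited by the split operation.

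With this characterization the core argument is short. Let $F$ be the given $k\textbf{-}face$ that intersects $\mathcal{H}$, and let $G$ be any $(k\text{+}1)\textbf{-}face$ of $S$ with $F \subseteq G$. By hypothesis $F$ owns a vertex $p$ with $a^{\top}p + b > 0$ and a vertex $q$ with $a^{\top}q + b < 0$. The structural fact I would use is that when $F$ and $G$ are both faces of $S$ with $F \subseteq G$, then $F$ is a face of $G$ and every vertex of $F$ is a vertex of $G$; this follows from the face-lattice property that the $0\textbf{-}faces$ of $G$ are exactly the $0\textbf{-}faces$ of $S$ contained in $G$. Hence $p$ and $q$ are vertices of $G$, so $G$ has both a $v^{+}$ and a $v^{-}$ and therefore intersects $\mathcal{H}$. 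Since $G$ was an arbitrary $(k\text{+}1)\textbf{-}face$ containing $F$, this is exactly the claim. Equivalently, and even more directly, the property ``contains points strictly on both sides of $\mathcal{H}$'' is monotone under inclusion, so $F \subseteq G$ immediately forces $G$ to inherit the witnesses $p, q \in F \subseteq G$.

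I expect the only real obstacle to be conceptual rather than computational: pinning down the correct notion of ``intersection'' so the lemma is both true and useful. If one read ``intersects'' as mere non-emptiness of $F \cap \mathcal{H}$, the statement would fail --- a face touching $\mathcal{H}$ only at a single boundary vertex $v^{0}$ need not force its cofaces to meet $\mathcal{H}$ transversally --- so I would be explicit that the intended and algorithmically relevant meaning is the two-sided-vertex version, matching the $1\textbf{-}face$ definition given above. Once that is fixed, the remaining ingredients (extrema of a linear functional attained at vertices, and the inclusion of the vertex set of $F$ into that of $G$ for nested faces) are standard face-lattice facts that can be cited from~\cite{henk200416,grunbaum2013convex}, and the monotonicity step closing the proof is immediate.
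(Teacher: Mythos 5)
Your proof is correct, but it takes a genuinely different --- and strictly stronger --- route than the paper's. The paper reads ``intersects'' as plain non-emptiness and gives a three-line set-inclusion argument: writing $\psi = \mathcal{H} \cap k\textbf{-}face \neq \emptyset$ and using $k\textbf{-}face \subseteq (k\text{+}1)\textbf{-}face$, it concludes $\psi \subseteq \mathcal{H} \cap (k\text{+}1)\textbf{-}face$, so the coface also meets $\mathcal{H}$. That is exactly the ``non-emptiness is monotone under inclusion'' observation, with no mention of vertex signs at all. You instead prove the transversal version: a face owning both a $v^{+}$ and a $v^{-}$ passes those witnesses up to every coface, because the $0\textbf{-}faces$ of $F$ are $0\textbf{-}faces$ of any coface $G \supseteq F$, and you tie the combinatorial condition to genuine two-sidedness via extrema of the linear functional $x \mapsto a^{\top}x + b$ being attained at vertices. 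Your version is the one Algorithm Step 1 actually needs --- the lattice climb starts from $1\textbf{-}faces$ containing both a $v^{+}$ and a $v^{-}$, and a face merely touching $\mathcal{H}$ at a $v^{0}$ must not be scheduled for splitting --- so your argument justifies the algorithmic use of Lemma~\ref{lemma:intersect} more faithfully than the paper's literal proof, which only propagates (possibly tangential) non-empty contact.

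One imprecision in your closing remark: you claim the statement ``would fail'' under the non-emptiness reading. It does not --- under a \emph{consistent} non-emptiness reading of both hypothesis and conclusion the lemma is trivially true, and that is precisely the paper's proof. What fails is the mixed reading, where a tangential hypothesis ($F \cap \mathcal{H}$ a single $v^{0}$) is asked to yield a transversal conclusion about the cofaces. Since your own argument never relies on that claim, this does not affect the correctness of your proof.
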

\begin{proof}
Let $\mathcal{H} \cap k\textbf{-}face = \psi$ and $\psi \neq \emptyset$, then $\psi \subseteq \mathcal{H}$ and $\psi \subseteq k\textbf{-}face$. As $k\textbf{-}face \subseteq (k\text{+}1)\textbf{-}faces$, then $\psi \subseteq (k\text{+}1)\textbf{-}faces$ and $\psi \subseteq (\mathcal{H} \cap (k\text{+}1)\textbf{-}faces)$. Thus $\mathcal{H}$ intersects with $(k\text{+}1)\textbf{-}faces$.
\end{proof}

\begin{lemma}
Assume a hyperplane $\mathcal{H}$ intersect with a $(k\text{+}1)\textbf{-} face$ and $k\textbf{-} face$ in a set and generate new faces $k\textbf{-} face'$ and $(k\text{-}1)\textbf{-} face'$ accordingly. If $k\textbf{-} face \subseteq (k\text{+}1)\textbf{-} face$, then $(k\text{-}1)\textbf{-} face' \subseteq k\textbf{-} face'$.
\label{lemma:2}
\end{lemma}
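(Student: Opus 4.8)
The plan is to handle the statement in two layers: first the trivial set-theoretic inclusion, then the face-lattice containment that the split algorithm actually needs. I would begin by renaming, writing $F$ for the $(k\text{+}1)\textbf{-}face$ and $G$ for the $k\textbf{-}face$, so the hypothesis reads $G \subseteq F$, and setting $F' = \mathcal{H} \cap F$ and $G' = \mathcal{H} \cap G$ for the two faces produced by the cut. By Definition~\ref{def:face} each of $F'$ and $G'$ is itself a set, and because $\mathcal{H}$ is assumed to intersect (cut across) both $F$ and $G$, the dimension of $\aff(\cdot)$ drops by exactly one in each case, so that $F' = k\textbf{-}face'$ and $G' = (k\text{-}1)\textbf{-}face'$ exactly as named. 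Confirming these dimension labels, and that neither intersection is degenerate, is where I would lean on the intersection hypothesis together with Lemma~\ref{lemma:intersect}, which guarantees the cut propagates consistently along the containment chain.

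At the level of set inclusion the claim is immediate: from $G \subseteq F$ and monotonicity of intersection, $\mathcal{H}\cap G \subseteq \mathcal{H}\cap F$, that is $(k\text{-}1)\textbf{-}face' \subseteq k\textbf{-}face'$. So if one reads the conclusion as plain set containment, the proof is a single line.

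The substantive step, and the part I expect to require real care, is that the symbol $\subseteq$ here should be the face-containment relation of the lattice (as it is in Lemma~\ref{lemma:intersect}), so I must show $G'$ is genuinely a \emph{face} of $F'$ rather than merely a subset. For this I would use the supporting-hyperplane characterization of Definition~\ref{def:face}: since $G$ is a face of $F$, write $G = F \cap \mathcal{H}_G$ for a supporting hyperplane $\mathcal{H}_G$ of $F$. Then $G' = \mathcal{H} \cap G = (\mathcal{H}\cap F) \cap \mathcal{H}_G = F' \cap \mathcal{H}_G$, and because $F' \subseteq F$ the closed halfspace of $\mathcal{H}_G$ containing $F$ still contains $F'$, so $\mathcal{H}_G$ is a supporting hyperplane of $F'$ with $F' \cap \mathcal{H}_G = G' \neq \emptyset$. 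By Definition~\ref{def:shyperplane} and Definition~\ref{def:face} this exhibits $G'$ as a face of $F'$, which is precisely the containment relation inherited in Step 2 of the split.

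The only obstacle is therefore conceptual rather than computational: recognizing that the cut by $\mathcal{H}$ commutes with the supporting hyperplane $\mathcal{H}_G$ that defines the face, after which everything reduces to the commuting intersection $G' = F' \cap \mathcal{H}_G$ above. I would keep the dimension bookkeeping from the first paragraph in mind here, since it is what rules out the degenerate possibility that $\mathcal{H}_G$ fails to support $F'$ properly (e.g. $G'$ collapsing to the wrong dimension), but I do not anticipate any genuine difficulty beyond stating these steps cleanly.
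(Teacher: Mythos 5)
Your proof is correct, and your middle paragraph --- from $G \subseteq F$ and monotonicity of intersection, $\mathcal{H}\cap G \subseteq \mathcal{H}\cap F$ --- is, essentially word for word, the entirety of the paper's own proof of Lemma~\ref{lemma:2}. Where you genuinely diverge is in refusing to stop there: you note that Step~2 of the split operation needs the inherited relation to be \emph{face containment} in the lattice, not mere set inclusion, and you supply the missing argument by writing $G = F\cap\mathcal{H}_G$ for a supporting hyperplane $\mathcal{H}_G$ of $F$, so that $G' = \mathcal{H}\cap G = F'\cap\mathcal{H}_G$, with $\mathcal{H}_G$ still supporting $F'$ (the closed halfspace containing $F$ contains $F'\subseteq F$, and $F'\cap\mathcal{H}_G = G'\neq\emptyset$ by hypothesis); by Definitions~\ref{def:shyperplane} and~\ref{def:face} this exhibits $G'$ as a genuine face of $F'$. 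The paper's one-line proof buys brevity but only establishes the subset relation, leaving the lattice-theoretic content implicit; your version is strictly stronger and is closer to what the correctness of the merge in Step~4 actually requires. Your dimension bookkeeping --- ruling out the degenerate case $F'\subseteq\mathcal{H}_G$, which would force $G'=F'$ and contradict the dimension labels in the statement --- is likewise a point the paper does not address. The one caveat is that you implicitly invoke the standard polytope fact that lattice containment $G\subseteq F$ means $G$ is itself a face of $F$ (a face of a face is a face), which holds for the convex sets considered here but is nowhere proved in the paper; a sentence acknowledging this would make your argument fully self-contained.
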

\begin{proof}
As $k\textbf{-} face \subseteq (k\text{+}1)\textbf{-} face$, it can be inferred that $(k\textbf{-} face \cap \mathcal{H})\subseteq((k\text{+}1)\textbf{-} face \cap \mathcal{H})$, from which we can derive that $(k\text{-}1)\textbf{-} face' \subseteq k\textbf{-} face'$.
\end{proof}

\section{Computation of Reachable Sets of CNNs}
As discussed in Section 2, there are two primary operations on a set when it passes through a CNN. The \textit{affine transformation} are applied in the layers such as the convolutional layer and the linear layer, and the \textit{split} operation is applied to the ReLU layer and the max pooling layer. As introduced in Section~\ref{section: fl}, the \textit{affine transformation} only linearly updates the vertices of a set and its face lattice structure will be preserved. The algorithmic implementation of this operation is straightforward. Therefore, this section will mainly focus on the \textit{split} operation on sets in ReLU and max pooling layers. Additionally, a tracking method is also proposed which enables backtracking to the input domain of the network given an output reachable set. This method can be utilized to identify sets of adversarial examples.

\begin{figure*}[h]
  \centering
  \includegraphics[scale = 0.32]{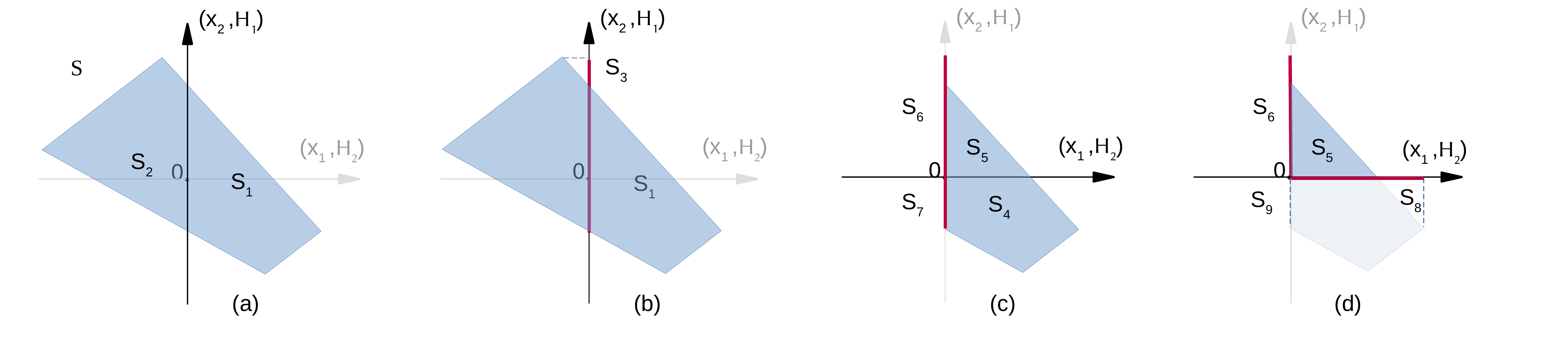}
  \caption{Demonstration of the \textit{split} operation in the ReLU layer}
  \label{fig:relu_split}
\end{figure*}

\subsection{ReLU Layer}

Assume a ReLU layer contains $n$ neurons and its input is denoted as $\textbf{x}\in \mathbb{R}^n$.
Then the output of the $k^{th}$ neuron is $y_k=\max\{0,x_k\}$ and its input space is divided by the boundary $x_k = 0$ into two domains with different linearity. The boundary can be generalized to the hyperplane $\mathcal{H}_k:\textbf{a}^{\top}\textbf{x}=0$ where $\textbf{a}=[a_1, a_2,\dots,a_n]$ and $a_k=1$ and $a_{i,i\neq k} =0$. When the input is a set $S$ where $\textbf{x}\in S$, it will be sequentially processed with the hyperplane $\mathcal{H}$ of each neuron. The sets generated w.r.t. one hyperplane will be further processed by the next one until all hyperplanes in this layer are considered.

The process w.r.t. one hyperplane $\mathcal{H}_k$ is as follows. In terms of the discussion in Section~\ref{section:splitting}, $positive$ and $negative$ sets can be obtained w.r.t. $\mathcal{H}_k$.  Depending on the linearity of the two domains of the $max$ function, the \textit{positive} set will stay unchanged as an output w.r.t to $\mathcal{H}_k$. On the other hand, the \textit{negative} set will be projected on $\mathcal{H}_k$ and transformed to a new set as an output, which is essentially done by setting the $x_k$ element of all the vertices in the set to 0.
Let the function $\mathcal{T}_{relu}^{[k]}$ denote this process, then the function $\mathcal{R}$ in Eq.~\eqref{equ:pre2} for the ReLU layer is equal to 
\begin{equation}
  \mathcal{R}(S) = \mathcal{T}_{relu}^{[n]}(\mathcal{T}_{relu}^{[n-1]}(\dots\mathcal{T}_{relu}^{[1]}(S)))
  \label{equ:relu_layer}
\end{equation}

{An example is used to illustrate this process as shown in Figure~\ref{fig:relu_split}. The ReLU layer contains two neurons $n_1$ and $n_2$. The input set $S \subset \mathbb{R}^2$ and element points $\textbf{x} =[x_1, x_2]^{\top}\in S$. When the set $S$ pass through the layer, it will be split sequentially by two hyperplanes that are imposed by two ReLU activation functions $max(0,x_1)$ and $max(0,x_2)$ from the $n_1$ and $n_2$. These two hyperplanes are respectively $\mathcal{H}_1: \{[1,0]^{\top}\textbf{x} = 0\}$ and $\mathcal{H}_2: \{[0, 1]^{\top}\textbf{x} = 0\}$. 
Together, there are four unique domains having different linearity over the input \textbf{x} to this layer, and they are
\begin{align*}
    & \{\textbf{x}~|~[1,0]^{\top}\textbf{x} \geq 0~and~ [0, 1]^{\top}\textbf{x} \geq 0\} \\
    & \{\textbf{x}~|~[1,0]^{\top}\textbf{x} \geq 0~and~ [0, 1]^{\top}\textbf{x} \leq 0\} \\
    & \{\textbf{x}~|~[1,0]^{\top}\textbf{x} \leq 0~and~ [0, 1]^{\top}\textbf{x} \geq 0\} \\
    & \{\textbf{x}~|~[1,0]^{\top}\textbf{x} \leq 0~and~ [0, 1]^{\top}\textbf{x} \leq 0\}
\end{align*}}

{The input set $S$ is first split by Hyperplane $\mathcal{H}_1$ into two subsets $\{S_1,S_2\}$.  The $S_2$  locates in the negative halfspace of $\mathcal{H}_1:[1,0]^{\top}\textbf{x} \leq 0$ and thus is the \textit{negative} set w.r.t. $\mathcal{H}_1$. Due to the property of the negative domain in the ReLU function, $S_2$ will be projected on the $\mathcal{H}_1$, generating a new set $S_3$. This process is essentially setting the $x_1$ of all the element points in the set to zeros. It's also equivalent to a linear transformation and the matrix is an identity matrix with first diagonal entry being zero. While the sets locating in the positive halfspace of $\mathcal{H}_1$ will remain. So far, sets $\{S_1, S_3\}$ are obtained from the \textit{split} by $\mathcal{H}_1$. Then, these sets will be split by $\mathcal{H}_2$ into four subsets $\{S_4, S_5, S_6, S_7\}$ as shown in (c), where $\{S_4, S_7\}$ are \textit{negative} sets w.r.t. $\mathcal{H}_2$. Similarly, sets $S_4$ and $S_7$ will be mapped on $\mathcal{H}_2$ by setting the $x_2$ of all their element points to zeros, generating new sets $S_8$ and $S_9$ which is the original point. Then the final outputs of this layer are $\{S_5, S_6, S_8, S_9\}$.
Let the function $\mathcal{T}_{relu}^{[i]}()$ denotes the \textit{split} process by a hyperplane $\mathcal{H}_i$ as well as the subsequent linear projection for sets that locate in the negative halfspace. Then we have the following relation:
\begin{align*}
  \{S_2,S_3\} &= \mathcal{T}_{relu}^{[1]}(\{S_1\})\\
  \{S_5,S_7,S_8,S_9\} &= \mathcal{T}_{relu}^{[2]}(\{S_2,S_3\})
\end{align*}
}

\SetKwProg{Fn}{Function}{}{end}
\SetKwFunction{Fnrelu}{FnReLULayer}
\SetKwFunction{Fnneurons}{FnValidNeurons}
\SetKwFunction{Fnsp}{FnSplit}
\SetKwFunction{Fnpop}{pop}
\SetKwFunction{Fnmap}{FnMap}
\SetKwFunction{Fnls}{list}
\SetKwFunction{Fnapp}{extend}


For one layer, given an input set, the maximum number of output reachable sets is $2^n$, where $n$ is the number of neurons. Our experimental results indicate that the actual number is associated with the input set's volume. An input set with a larger volume will generate a larger number of sets, as it has a higher likelihood of being split by hyperplanes. This process is described in Algorithm~\ref{alg:relu}. It's based on a recursive function which aims to reduce the repeated computation on the hyperplanes that don't intersect with sets. Given an input set $S$, it starts with \Fnrelu{S, neurons=empty, flag=False}. In each recursion, only one valid neuron whose hyperplane truly intersects with $S$ is considered. Function \Fnneurons is first to compute all such valid neurons denoted as \textit{news} as well as the neurons \textit{negs} where $S$ is \textit{negative} w.r.t. the target hyperplanes. Subsequently, $S$ will be projected on those hyperplanes using Function \Fnmap. Function \Fnsp denotes the split process introduced in Section~\ref{section:splitting}.



\begin{algorithm}
\KwResult{Output reachable sets }
\Fn(){\Fnrelu{S, neurons, flag=True}}{
\If(){neurons \textbf{is} empty \textbf{and} flag}{
    \Return{\Fnls{S}} 
}
\textit{news}, \textit{negs} = \Fnneurons{S, neurons}\;

\textit{S}.\Fnmap{\textit{negs}}\;

\If(){\textit{news} \textbf{is} empty}{
    \Return{\Fnls{S}}
}
\textit{aneuron} = \textit{news}.\Fnpop{0}\;

\textit{outputs} = \Fnsp{S, aneuron}\;
\textit{all} = \Fnls{}\;
\For{s \textbf{in} outputs}{
    \textit{all}.\Fnapp{\Fnrelu{s, news}}
}
\Return{all}
}
\caption{Computation in ReLU layers}
\label{alg:relu}
\end{algorithm}

\subsection{Max Pooling Layer}
The purpose of the max pooling layer is to reduce the dimensionality of an input by pooling with a $max$ function. 
For parameter configurations, we consider the most common kernel size $2\times2$ and stride size $2\times2$. Each pool contains four unique element dimensions and there are no overlaps between pools. Suppose a pool contains dimensions $\{x_1, x_2, x_3, x_4\}$, then the pooling process will be formulated by $y=\max\{x_1, x_2, x_3, x_4\}$. Element $x_i$ will be the output if $\forall x_{j(j\neq i)},\ x_i- x_j\geq 0$. We notice that the input domain over which $x_i$ is the maximum is the common \textit{positive} halfspace of three hyperplanes $\mathcal{H}_{(x_i, x_{j(j\neq i)})}\text{:}\ x_i-x_j = 0$. When an input set to this pool spans this domain, its subset can be computed by sequentially applying the split operation with the three aforementioned hyperplanes. Considering the other three dimensions as well, there are totally six unique hyperplanes and they are $\mathcal{H}_{(x_1, x_2)}, \mathcal{H}_{(x_1, x_3)},\mathcal{H}_{(x_1, x_4)},\mathcal{H}_{(x_2, x_3)}, \mathcal{H}_{(x_2, x_4),} $, and $ \mathcal{H}_{(x_3, x_4)}$, respectively. With them, the input space of the $max$ function is divided into four domains over which they exhibit unique linearity, and they are 
\begin{align*}
    &\{\textbf{x}~|~x_1-x_2\geq 0~\&~x_1-x_3\geq0~\&~x_1-x_4\geq0\}\\
    &\{\textbf{x}~|~x_2-x_1\geq 0~\&~x_2-x_3\geq0~\&~x_2-x_4\geq0\}\\
    &\{\textbf{x}~|~x_3-x_1\geq 0~\&~x_3-x_2\geq0~\&~x_3-x_4\geq0\}\\
    &\{\textbf{x}~|~x_4-x_1\geq 0~\&~x_4-x_2\geq0~\&~x_4-x_3\geq0\}
\end{align*}

The split operation is the same as introduced in Section~\ref{section:splitting}. However, in this case, the split subset will be handled differently. In the ReLU layer, the \textit{negative} sets will be mapped to their corresponding hyperplane. While in the max pooling layer, non-maximum dimensions will be eliminated. Here, the concepts of the \textit{negative} set and the \textit{positive} set are extended to be w.r.t a specific dimension as defined in Definition~\ref{def:set_type_extended}.
\begin{definition}[Set Types]
    In the max pooling layer, given a hyperplane $\mathcal{H}_{(x_i, x_j)}$, a set is named $x_i$-\textit{negative} or $x_j$-\textit{positive} if it belongs to the closed halfspace $x_i-x_j\leq 0$, and is named $x_j$-\textit{negative} or $x_i$-\textit{positive} if it belongs to the closed halfspace $x_j-x_i\leq 0$.
    \label{def:set_type_extended}
\end{definition}

When a set is split by a hyperplane $\mathcal{H}_{(x_i, x_j)}$ into two non-empty subsets, we have a $x_i$-\textit{negative} and a $x_j$-\textit{negative} subset. The dimension for which the subset is \textit{negative} will be eliminated from its vertices. Then these new sets will be processed w.r.t. the next hyperplane.
Let $\mathcal{T}_{max}^{[x_i, x_j]}$ denote the process with the hyperplane $\mathcal{H}^{[x_i, x_j]}$, and $\mathcal{P}$ denote the sequential splits of an input set $S$ by the hyperplanes. Then we can derive Eq.~\eqref{equ:pt}. 
\begin{multline}
\label{equ:pt}
    \mathcal{P} = \mathcal{T}_{max}^{[x_3, x_4]}(\mathcal{T}_{max}^{[x_2, x_4]}(\mathcal{T}_{max}^{[x_1, x_4]}( \\
    \mathcal{T}_{max}^{[x_2, x_3]}(\mathcal{T}_{max}^{[x_1, x_3]}(\mathcal{T}_{max}^{[x_1, x_2]}(\{S\}))))))
\end{multline}

{The process $\mathcal{T}_{max}^{[x_i, x_j]}$ is illustrated in Figure~\ref{fig:maxpooling}. There are two dimensions $x_1$ and $x_2$ and correspondingly there is one hyerplane $\mathcal{H}_{(x_1, x_2)}:x_1-x_2 = 0$. As shown in Figure (a), the input set $S$ is first split by $\mathcal{H}_{(x_1, x_2)}$ into two subsets $S_1$ and $S_2$ which respectively are $x_2$-\textit{negative} and $x_1$-\textit{negative}. The elimination of dimensions is shown in Figure (b), where the 2-dimensional sets $S_1$ and $S_2$ respectively generate new 1-dimensional sets $S_3$ and $S_4$.
Finally, this pooling yields reachable sets $\{S_3, S_4\}$. Let $\mathcal{T}_{max}^{[x_i, x_j]}$ be the split operation with the hyperplane $\mathcal{H}^{[x_i, x_j]}$ as well as the subsequent linear mapping, then we have
\begin{equation}
  \{S_3, S_4\} = \mathcal{T}_{max}^{[x_1, x_2]}(\{S\})
\end{equation}}

\begin{figure}[h]
  \centering
  \includegraphics[scale=0.34]{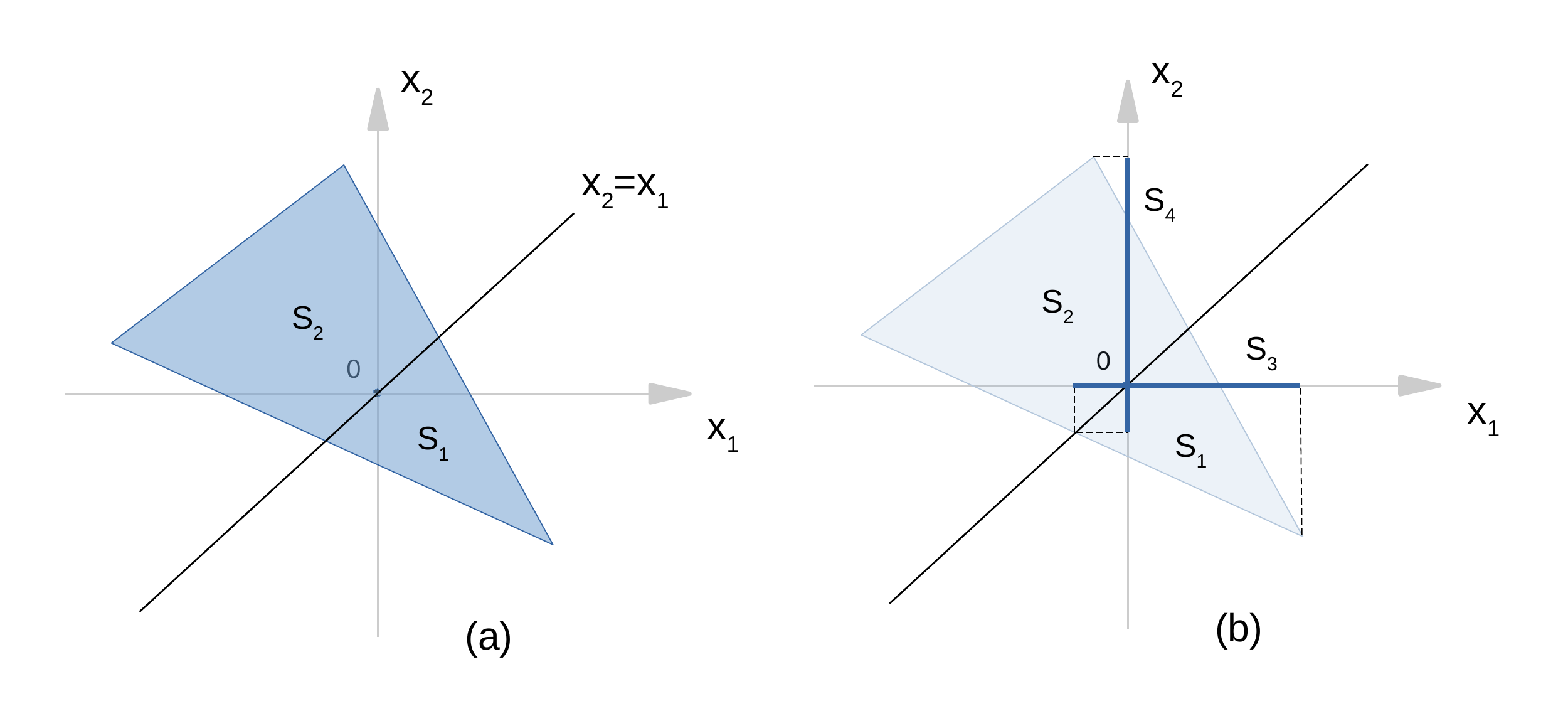}
  \caption{Demonstration of the split operation in the maxpooling layer}
  \label{fig:maxpooling}
\end{figure}

Suppose the layer has $n$ pooling operations and the $k^{th}$ pooling is denoted as $\mathcal{P}_k$, the $\mathcal{R}$ in Eq.~\eqref{equ:pre2} for the max pooling layer with an input set $S$ can be formulated as Eq.~\eqref{equ:rp}.
 
\begin{equation}
\label{equ:rp}
    \mathcal{R}(S) = \mathcal{P}_n(\mathcal{P}_{n-1}(\dots\mathcal{P}_1({S})))
\end{equation}


\SetKwProg{Fn}{Function}{}{end}
\SetKwFunction{Fnmp}{Fnmax pooling}%
\SetKwFunction{Fnelim}{FnEliminateDims}
\SetKwFunction{Fnpools}{FnValidPools}
\SetKwFunction{Fnpop}{pop}
\SetKwFunction{Fnhps}{FnValidHplanes}
\SetKwFunction{Fnapp}{extend}
\SetKwFunction{ls}{list}
\SetKwFunction{Fnsp}{FnSplit}

Given an input set, the maximum number of the output reachable sets of this layer is $2^{3n}$. Similar to the ReLU layer, the actual number is related to the input set's volume. In addition to Eq.~\eqref{equ:rp}, The process in the max pooling layer is also described in Algorithm~\ref{alg:max pooling}. It's conducted in a similar recursive manner as Algorithm~\ref{alg:relu}. Given an input set $S$, this algorithm starts with \Fnmp{S,\ pools=empty,\ flag=False}. Function \Fnpools helps identify a new set of pools \textit{news} where intersections exist between their hyperplanes and $S$. The set $S$ is processed with one pool in each recursion, which is described by Eq.~\eqref{equ:pt} and Line 9-14 in the algorithm. Function \Fnhps helps identify hyperplanes. Function \Fnsp splits a set with a hyperplane.
\begin{algorithm}[ht]
\KwResult{Output reachable sets }
\Fn(){\Fnmp{\textit{S}, \textit{pools}, \textit{flag}=True}}
{
\If{\textit{pools} \textbf{is} empty \textbf{and} \textit{flag}}{
    \Return{S. \Fnelim{}} \;
}
\textit{news} = \Fnpools{pools} \;
\If{\textit{news} \textbf{is} empty }{
    \Return{S. \Fnelim{}}\;
}
\textit{apool} = \textit{news}.\Fnpop{\text{0}} \;

\textit{hps} = \Fnhps{apool} \;
\textit{outputs} = \ls{S} \;
\For(){\textit{hp} \textbf{in} \textit{hps}}
{
    \textit{temp} = \ls{}\;
    \For{s \textbf{in} \textit{outputs}}{
     \textit{temp}.\Fnapp{\Fnsp{s, hp}}
     }
    \textit{outputs} = \textit{temp}\;
}
\textit{all} = \ls{} \;
\For{s \textbf{in} outputs}
{ 
    \textit{all}.\Fnapp{\Fnmp{s, news}} \;
}
\Return{\textit{all}}\;
}
\caption{Computation in max-pooling layers}
\label{alg:max pooling}
\end{algorithm}

\subsection{Backtracking Method}
Given a subset of the output reachable sets, this method enables backtracking to find the corresponding subset in the input domain. This enables us to identify adversarial input space. As described in Eq.~\eqref{equ:relu_layer} and \eqref{equ:pt}, the domains that exhibits different linearity in the $max$ function are always separately considered for processing reachable sets. Therefore, for each output reachable set, there always exists a subset in the input space over which the neural network is linear, and this subset is named \textit{linear region}. There has been recent work on the quantification of linear regions to asses the expressivity of neural networks~\cite{montufar2014number,serra2018bounding, hanin2019complexity}.

Our backtracking method tracks the \textit{linear region}s of sets when they are sequentially processed by neurons. Here we represent the \textit{linear region}s using vertices (V-representation). As sets are related to their \textit{linear regions} with a linear map, the vertices of the \textit{linear region} and the vertices of the set itself are in a one-to-one correspondence. This relation is maintained in the \textit{affine transformation} on the set. For the \textit{split} process in the ReLU neuron $\mathcal{T}_{relu}$ and the \textit{max pooling}  $\mathcal{T}_{max}$, in addition to splitting the set, we also split its \textit{linear region} accordingly, such that their one-to-one relation can be maintained and tracked. The \textit{linear region} of the input set to the network is the input set itself. Overall, all the computed reachable sets have their \textit{linear region}s and thus any violation in the reachable sets can be backtracked to the input space. 

\section{Fast Reachability Analysis}
\label{section: fast}

With the face lattice structure, we can efficiently compute the exact output reachable sets for most of the deep neural networks and guarantee complete and sound verification.  
However, due to the exponential computational complexity in the ReLu layer and max pooling layer, the analysis of a deeper CNN with larger input sets is still challenging. Here we propose an efficient alternative approach which only considers the neurons that are the most sensitive  to the output. 
The impact of each neuron is ranked using the output's gradient with respect to their input. 
We use a \textit{relaxation} factor to select a fraction of highest ranked neurons in each layer and compute reachability analysis using only the selected neurons. 
The \textit{relaxation} factor defines the trade-off between computation time and completeness of the computed reachable sets. Our experimental results show that even with a small \textit{relaxation} factor, the exact reachable sets can still be well approximated as shown in Fig.~\ref{fig:fastreach1} and \ref{fig:fastreach}.

\SetKwProg{Fn}{Function}{}{end}
\SetKwFunction{Fnnetwork}{FnMaxPooling}%
\SetKwFunction{Fnaffine}{FnAffineTrans}
\SetKwFunction{Fncd}{FnImpactNeurons}
\SetKwFunction{Fnrelu}{FnReLULayerFast}
\SetKwFunction{Fnmp}{FnMaxPoolingFast}
\SetKwFunction{Fnsp}{FnSplit}

\begin{algorithm}[ht]
\KwData{\textit{$S_{in}$} \comm{\# an input sets}; \textit{$\delta$} \comm{\# the \textit{relaxation} factor}
}
\KwResult{\textit{Sets} \comm{\# a list of output sets}
}
\textit{Sets} = \textit{$S_{in}$} \;
\For{layer \textbf{in} network\_layers}{
    \uIf{layer \textbf{is} affine-transformation layer}{
    \textit{Sets} = \Fnaffine{Sets} 
    }
    \uElseIf{layer \textbf{is} ReLu layer 
}{
    \textit{neurons} = \Fncd{$\delta$} 
    \;
    \textit{Sets} = \Fnrelu{Sets, neurons} 
    }
\uElseIf{layer \textbf{is} Maxpooling layer}{
    \textit{neurons} = \Fncd{$\delta$} \;
    \textit{Sets} = \Fnmp{Sets, neurons}
    }
}
\Return{Sets}
\caption{Faset Computation of Reachable sets}
\label{alg:fast-network}
\end{algorithm}

The reachable set computation is slightly different using the fast analysis method. In the ReLU layer, for the selected neurons, computation remains unchanged. However, for each of the non-selected neurons, at most two subsets can be generated by the split operation and only one out of two subsets, which has the highest number of vertices, is retained.
We chose the subset with more vertices since it is more likely that it carries more geometric information. 
In the max pooling layer, for the selected neurons, computation remains unchanged. Here, we apply a split operation for each of the non-selected neurons and discard the subset which is \textit{positive} w.r.t. a non-selected neuron (see Defn.~\ref{def:set_type_extended}). When two neurons are non-selected, we keep the subset with more vertices to avoid empty output. 


{The fast analysis is presented in Algorithm~\ref{alg:fast-network}. Functions \Fnrelu and \Fnmp accept a list of sets as input. Given a \textit{relaxation} factor, Function \Fncd selects \textit{neurons} on which the fast computation of reachable sets are conducted. Functions \Fnrelu and \Fnmp are the revised functions respectively from Algorithm~\ref{alg:relu} and \ref{alg:max pooling}. The revised part of these functions is that when two subset are generated from Function \Fnsp and the current neuron is not in \textit{neurons}, the subset with more vertices will be kept while the other one will be discarded. }

\begin{figure}[ht]
	\centering
	\begin{subfigure}{.25\textwidth}
		\includegraphics[width=\textwidth]{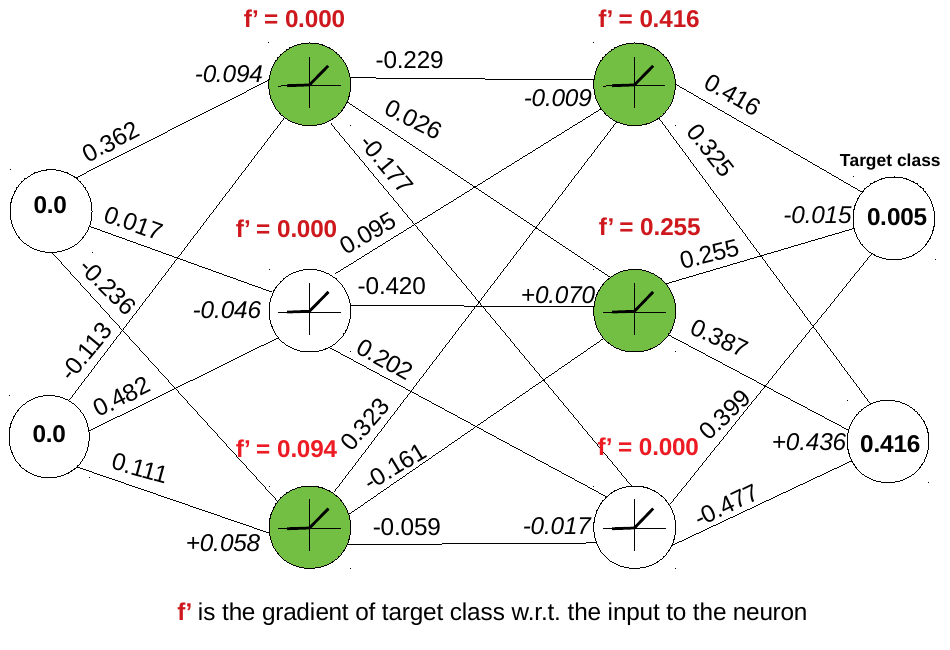}
		\label{fig:mnist-a}
	\end{subfigure}
	\begin{subfigure}{.35\textwidth}
		\includegraphics[width=\textwidth]{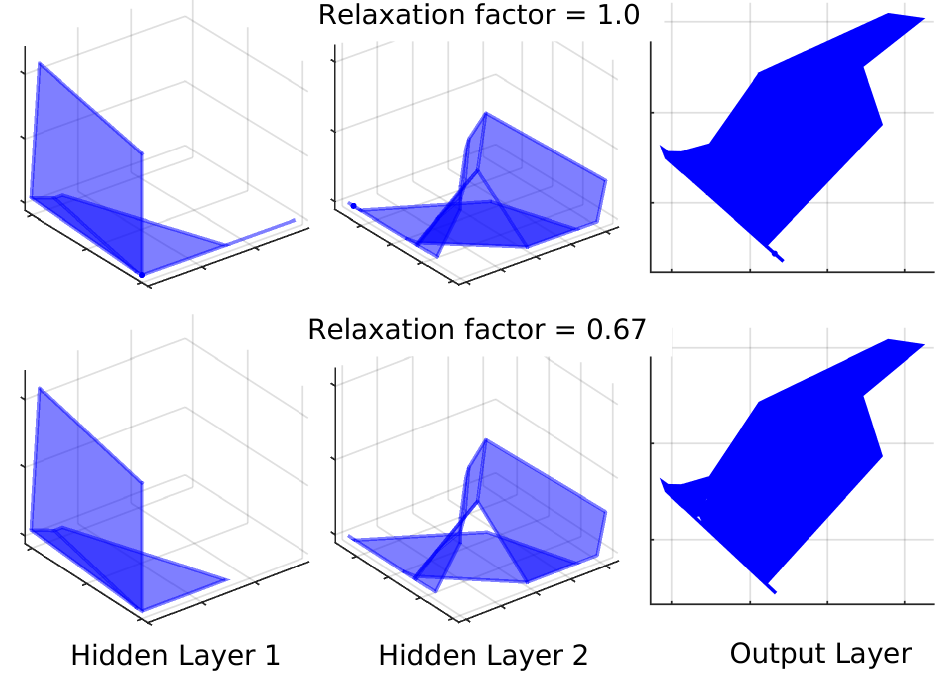}
		\label{fig:mnist-b}
	\end{subfigure}
	\caption{Reachable domains with different factors}
	\label{fig:r3}
\end{figure}

\begin{figure}[ht]
    \centering
    \begin{subfigure}[t]{0.23\textwidth}
        \centering
        \includegraphics[width=\textwidth]{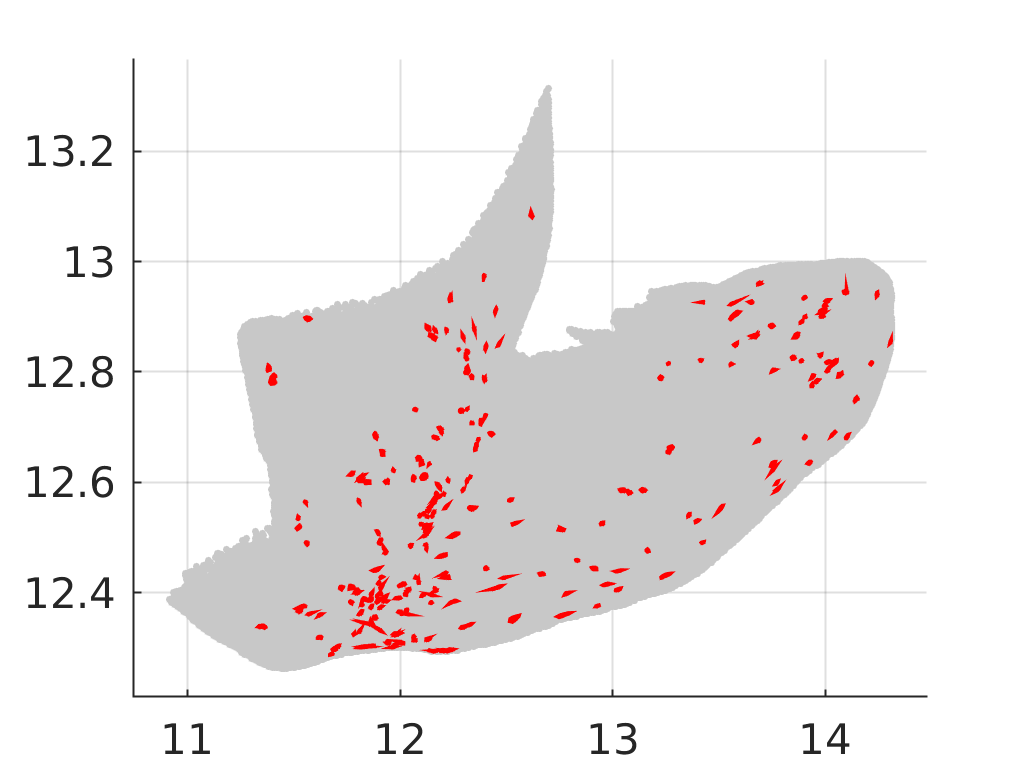}
        \caption{{CIFAR10} factor=0.01}
    \end{subfigure}
     \begin{subfigure}[t]{0.23\textwidth}
        \centering
        \includegraphics[width=\textwidth]{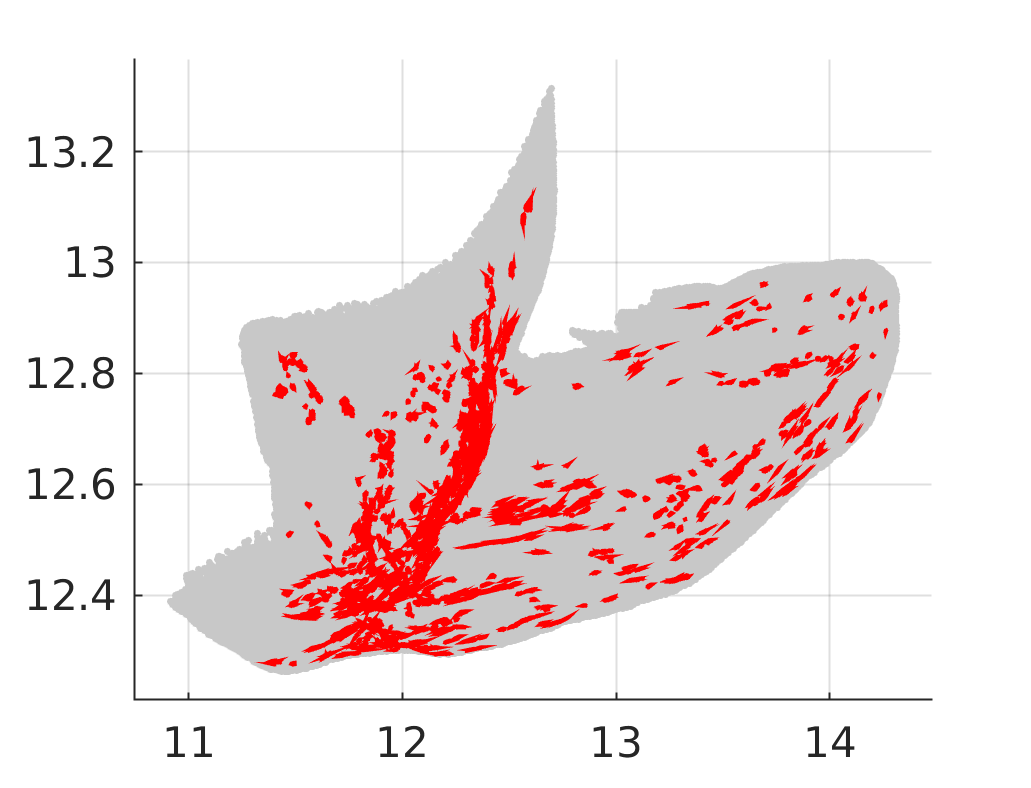}
       \caption{{CIFAR10} factor=0.1}
    \end{subfigure}
      \begin{subfigure}[t]{0.23\textwidth}
        \centering
        \includegraphics[width=\textwidth]{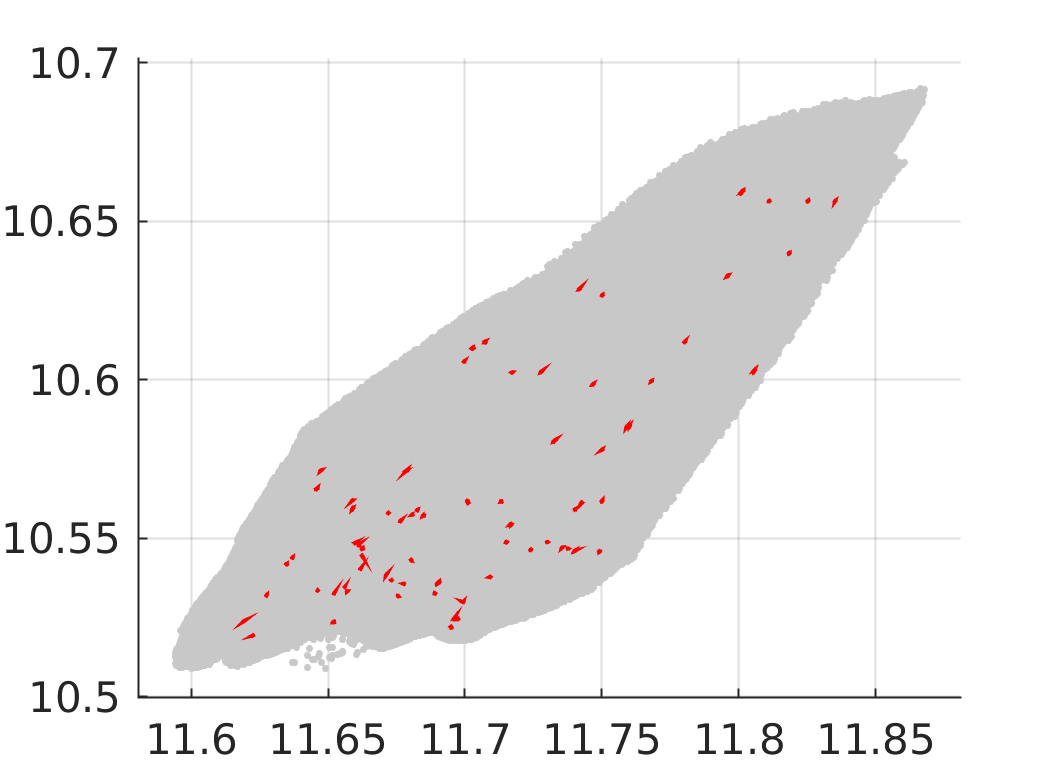}
        \caption{{VGG16} factor=0.01}
    \end{subfigure}
     \begin{subfigure}[t]{0.23\textwidth}
        \centering
        \includegraphics[width=\textwidth]{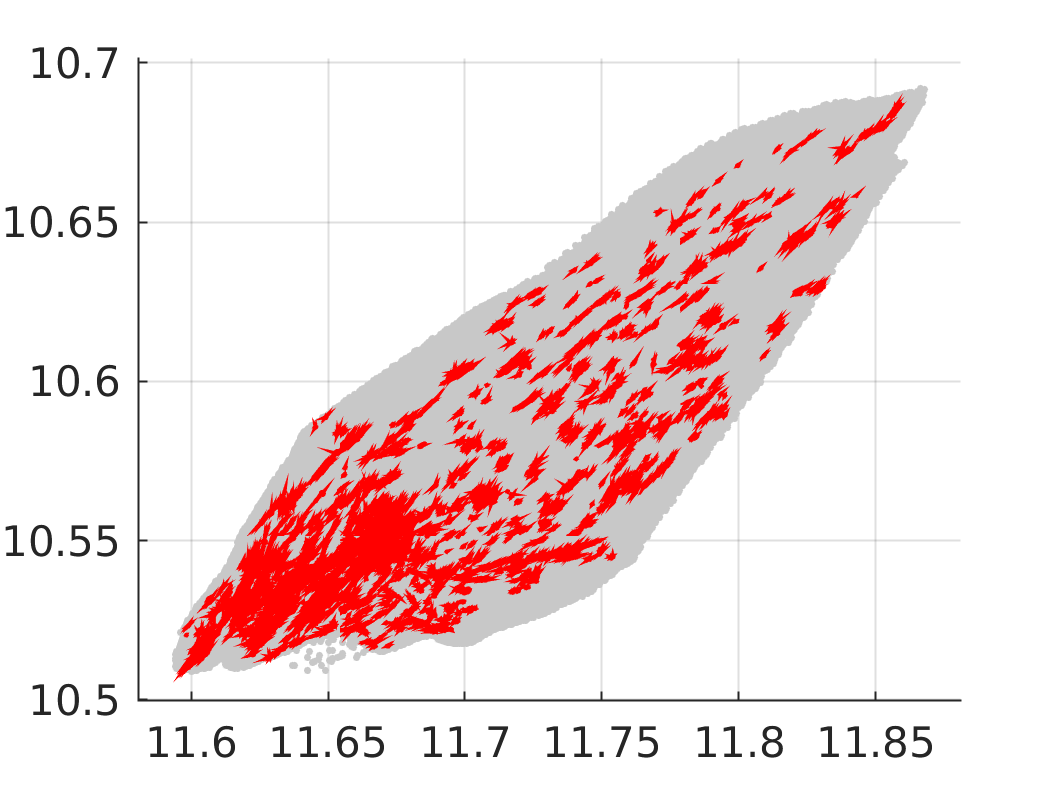}
        \caption{{VGG16} factor=0.1}
    \end{subfigure}
    \caption{The fast computation of reachable sets with different \textit{relaxation} factors. The grey area is the exact reachable domain. The red area are the reachable domains computed by the fast method.The $x$ axis is the logit of the correct class while the $y$ axis is the second highest logit.}
    \label{fig:fastreach1}
\end{figure}

\begin{figure*}[h]
    \centering
     \begin{subfigure}[t]{0.22\textwidth}
        \centering
        \includegraphics[scale=0.55]{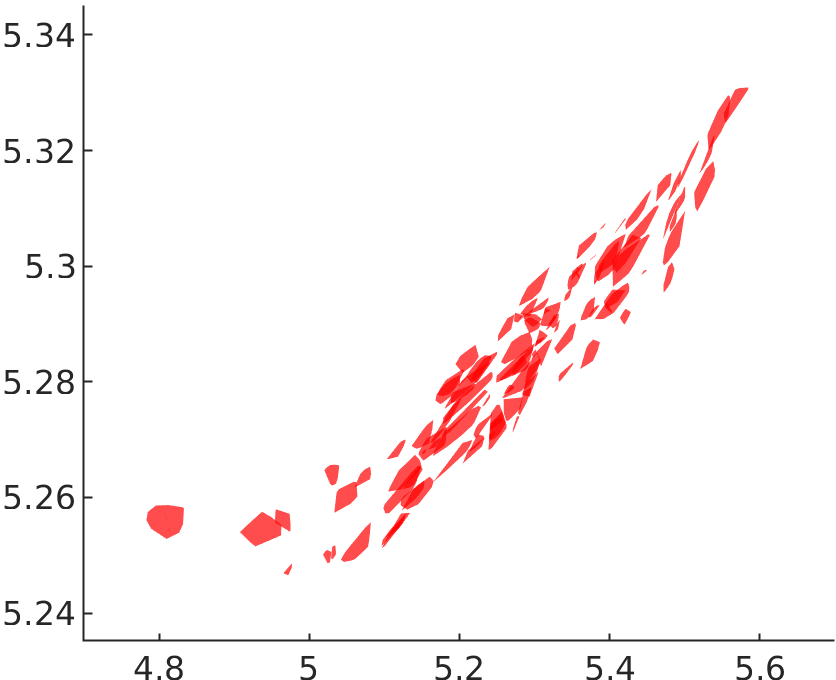}
    \end{subfigure}
    ~
     \begin{subfigure}[t]{0.22\textwidth}
        \centering
        \includegraphics[scale=0.55]{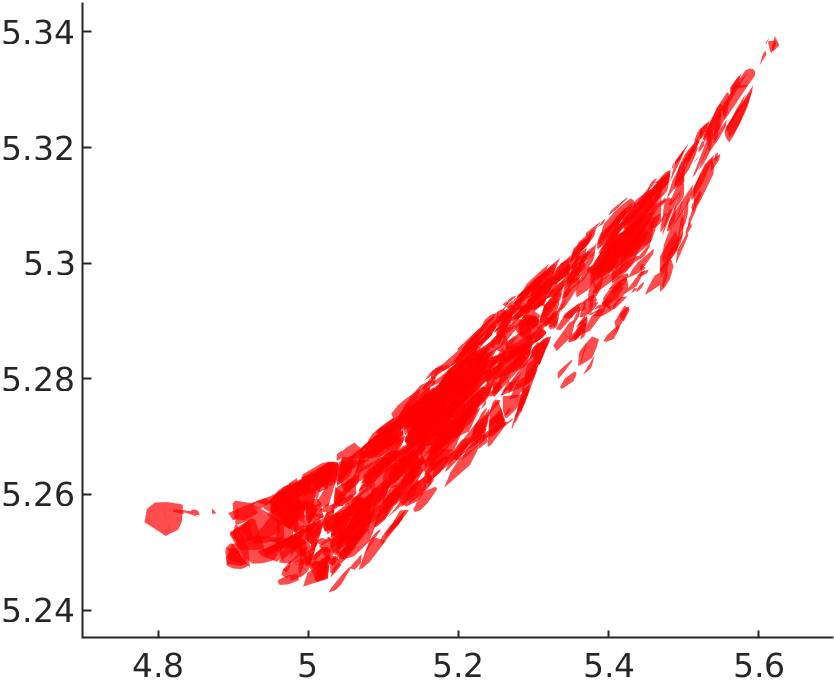}
    \end{subfigure}
    ~
      \begin{subfigure}[t]{0.22\textwidth}
        \centering
        \includegraphics[scale=0.55]{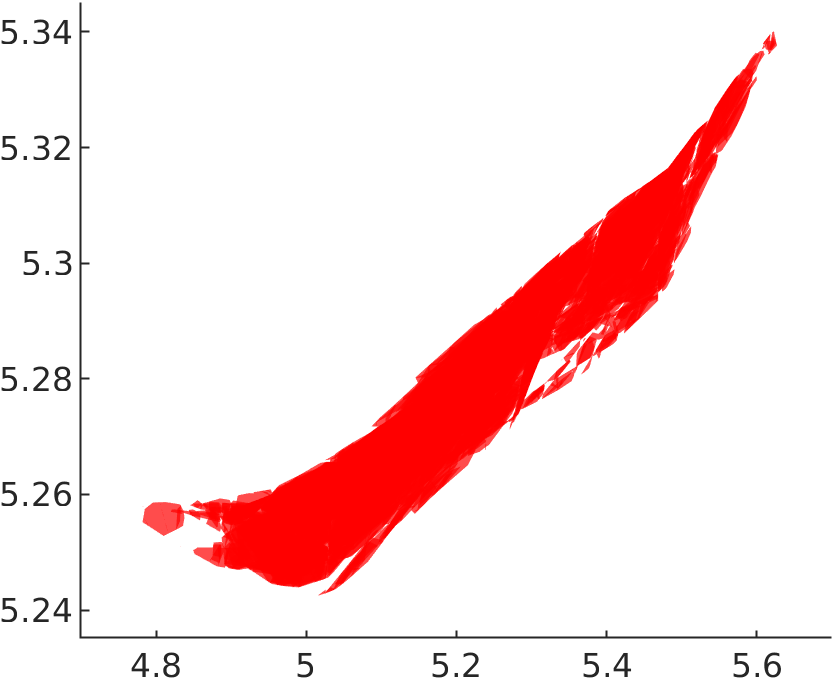}
    \end{subfigure}
    ~
     \begin{subfigure}[t]{0.22\textwidth}
        \centering
        \includegraphics[scale=0.55]{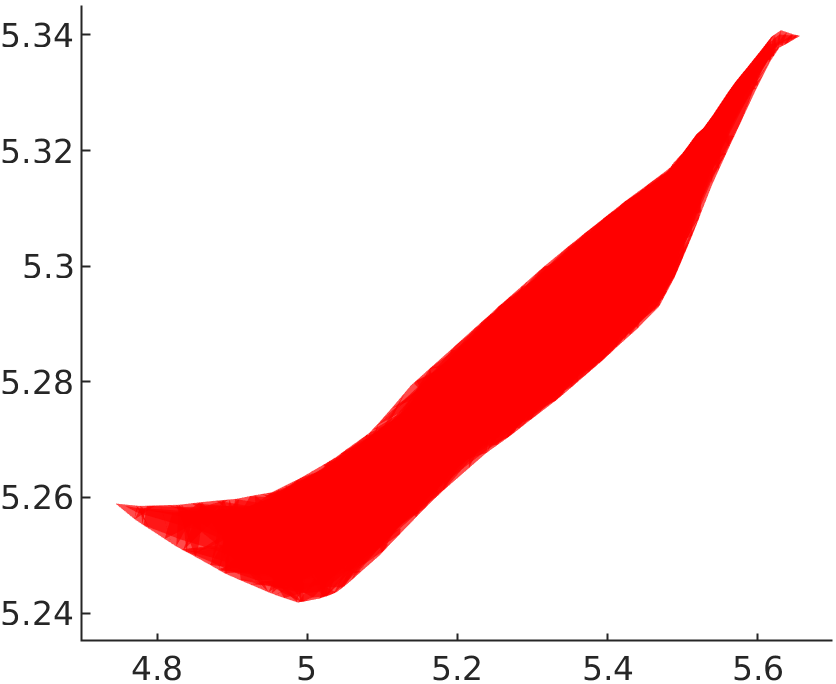}
    \end{subfigure}
    
    \begin{subfigure}[t]{0.22\textwidth}
        \centering
        \includegraphics[scale=0.55]{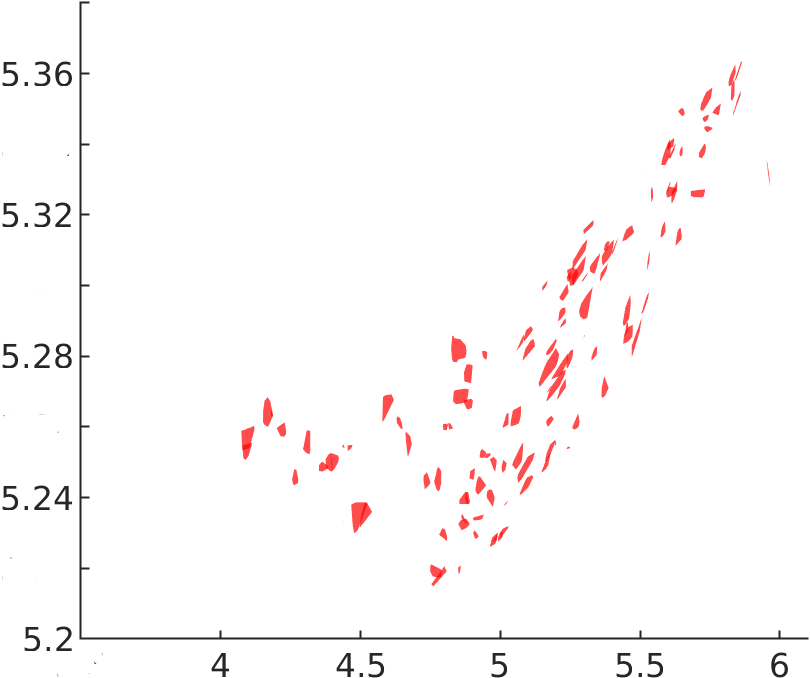}
    \end{subfigure}
    ~
     \begin{subfigure}[t]{0.22\textwidth}
        \centering
        \includegraphics[scale=0.55]{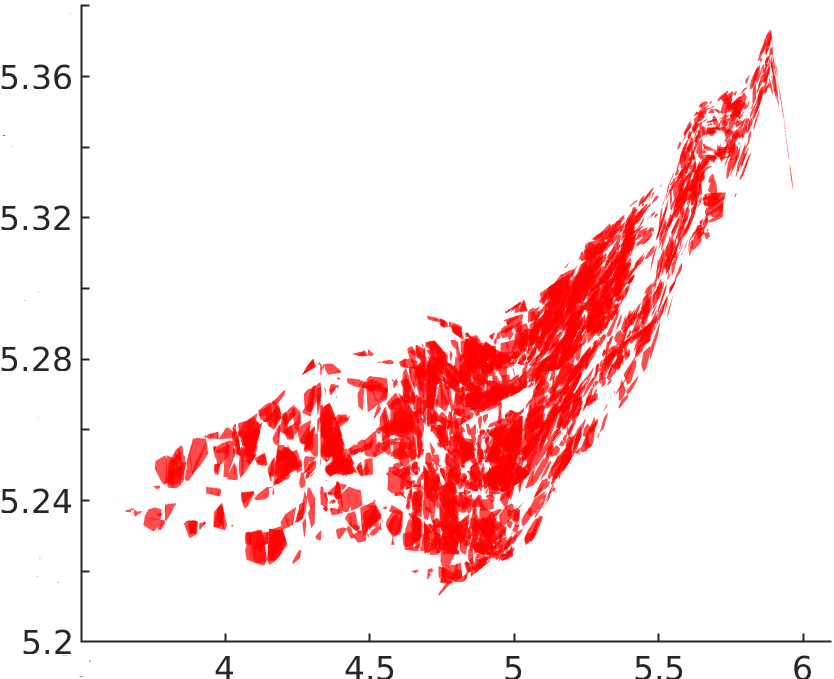}
    \end{subfigure}
    ~
      \begin{subfigure}[t]{0.22\textwidth}
        \centering
        \includegraphics[scale=0.55]{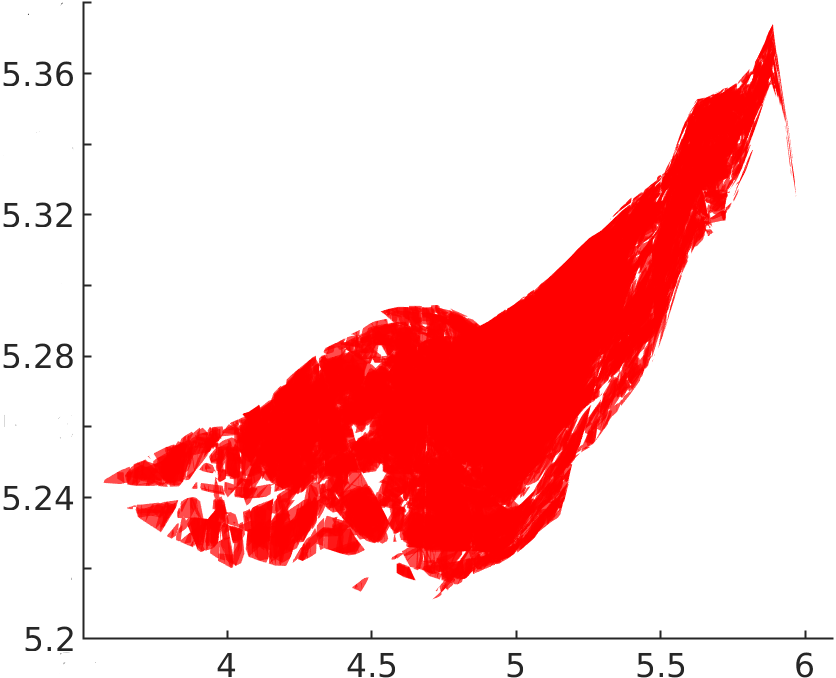}
    \end{subfigure}
    ~
     \begin{subfigure}[t]{0.22\textwidth}
        \centering
        \includegraphics[scale=0.43]{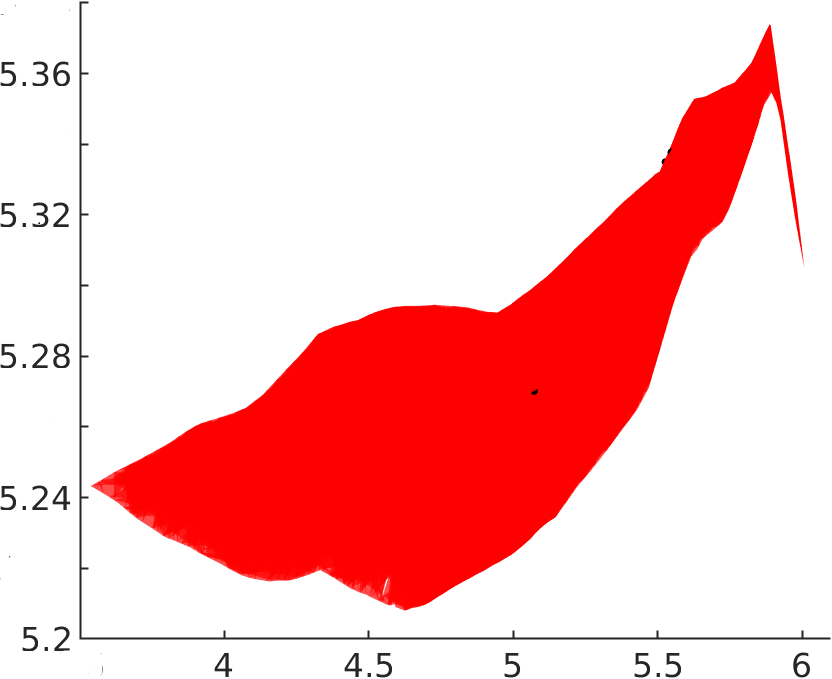}
    \end{subfigure}
    
        \begin{subfigure}[t]{0.22\textwidth}
        \centering
        \includegraphics[scale=0.55]{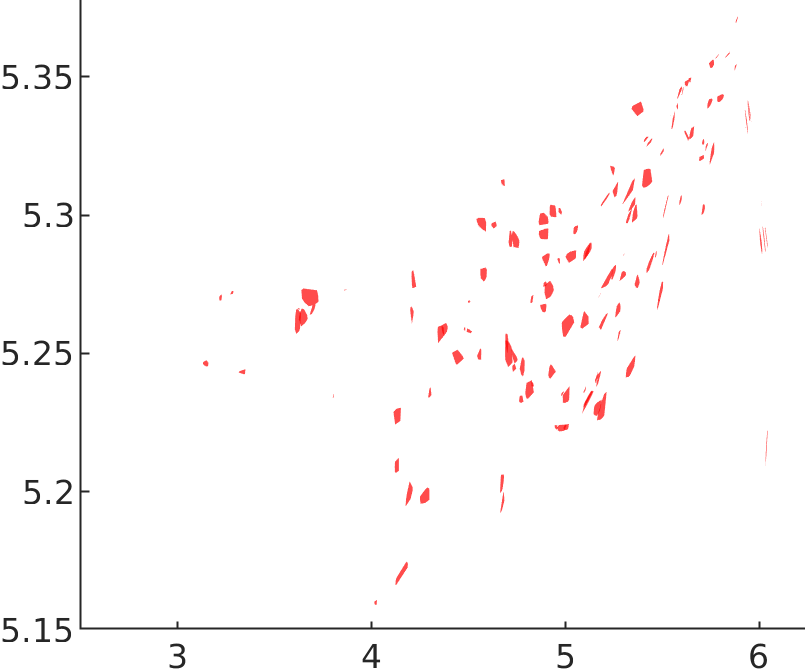}
    \end{subfigure}
    ~
     \begin{subfigure}[t]{0.22\textwidth}
        \centering
        \includegraphics[scale=0.55]{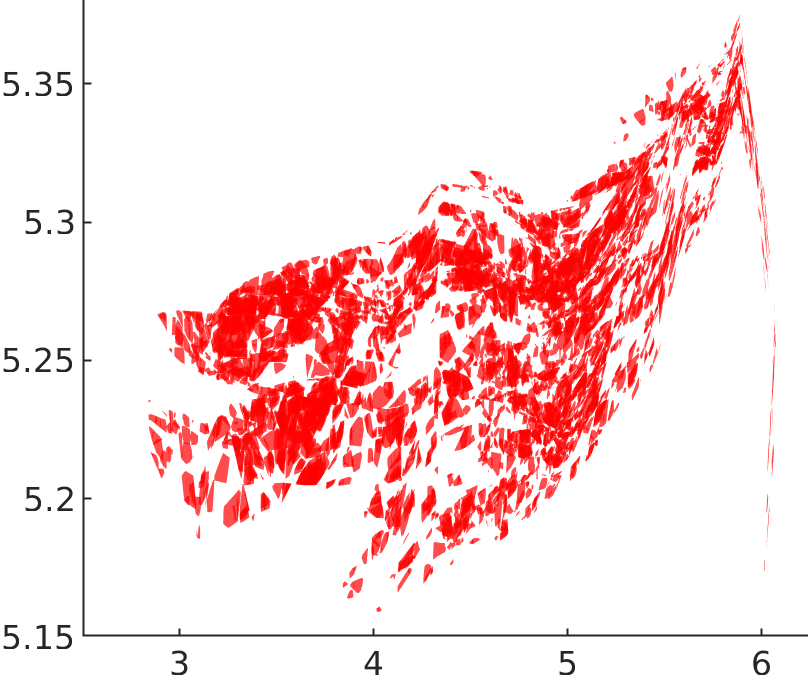}
    \end{subfigure}
    ~
      \begin{subfigure}[t]{0.22\textwidth}
        \centering
        \includegraphics[scale=0.55]{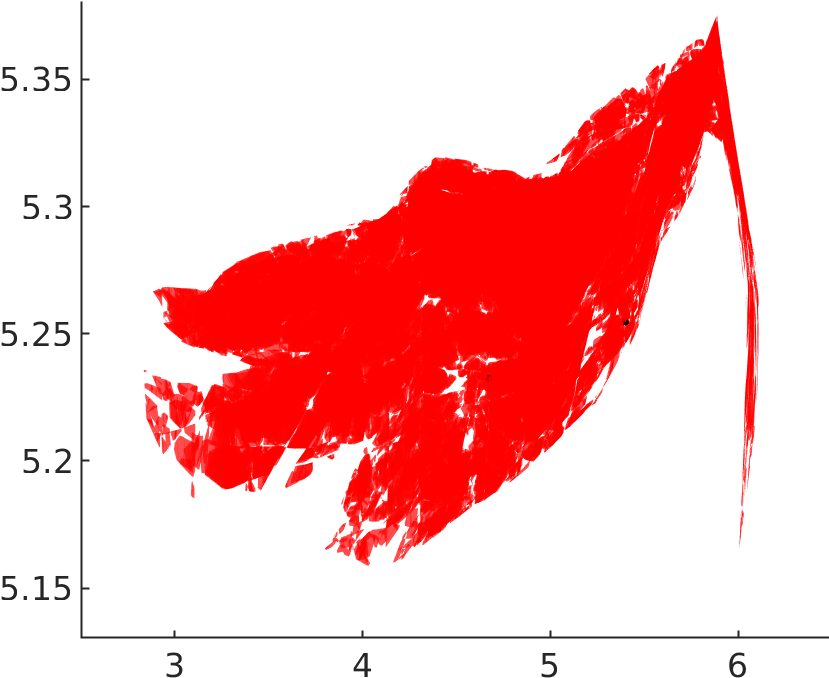}
    \end{subfigure}
    ~
     \begin{subfigure}[t]{0.22\textwidth}
        \centering
        \includegraphics[scale=0.55]{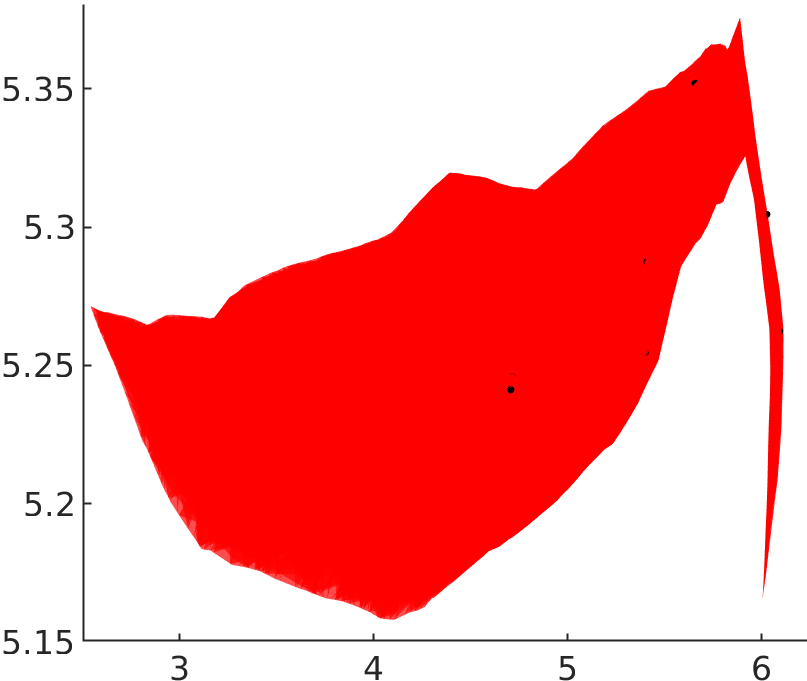}
    \end{subfigure}
    
    \caption{The fast computation of reachable sets with different \textit{relaxation} factors on a CIFAR10 image. The computation in each row is with the different $\epsilon$s. From the top to the bottom, they are respectively $0.02$, $0.06$ and $0.10$. The computation in each column is with different \textit{relaxation} factors. From the right to the left, they are respectively $0.01$, $0.2$, $0.6$ and $1.0$.  The horizontal axis corresponds to the output of the correct class and the vertical axis corresponds to the class which has the most likelihood to be misclassified to. }
    \label{fig:fastreach}
\end{figure*}


\begin{table*}[]
\centering
\renewcommand{\arraystretch}{1.3}
\renewcommand{\tabcolsep}{1.3mm}
\caption{Evaluation of the fast reachability analysis with different settings of epsilons and \textit{relaxation} factors. The result format is \textbf{Running Time}(\textit{sec})/\textbf{ Number of Reachable Sets} for each computation.}
\resizebox{1.7\columnwidth}{!}{\begin{tabular}{ccccccc}\toprule
\textit{relaxation}     & 0.01 & 0.2 & 0.4 & 0.6 & 0.8 & 1.0 \\ \hline
$\epsilon$=0.02 &   $5.3\ /\ 124$    & $8.6\ /\ 1602$     & $12.5\ /\ 4109$     & $23.7\ /\ 14274$     &  $32.3\ /\ 22449$   &  $48.7\ /\ 42931$    \\
$\epsilon$=0.06 &  $5.2\ /\ 126$     & $16.3\ /\ 6409$      &   $37.0\ /\ 22902$    &  $115.5\ /\ 95929$    &  $289.5\ /\ 261140$   & $765.5\ /\ 766849$   \\
$\epsilon$=0.10 &  $ 5.2 \ /\ 143 $    &  $ 24.2 \ /\ 11445 $    &  $ 66.4 \ /\ 44986 $   &  $ 268.3 \ /\ 223970 $   &   $ 968.7 \ /\ 835045 $  & $ 2930.1 \ /\ 2722864$  \\ \bottomrule
\end{tabular}
 \label{tab:fasteval}}
\end{table*}

Here is a running example to demonstrate our algorithm in Figure~\ref{fig:r3}. It starts by ranking the neurons in each layer in terms of sensitivity to perturbations. The relaxation factor defines the percentage of top-ranked neurons considered in the computation of reachable sets. The lower the factor, the fewer the number of neurons considered and the faster the method. The input is x=0 and the perturbed range is $x\pm0.5$. When factor=1.0, it considers all neurons and compute exact reachable domain. The result is shown in the first row where the exact reachable domains of each layer are included. When factor=0.67, it considers 2 neurons(in green) in each hidden layer which have highest absolute gradient of the target output. The algorithm computes the subset of exact reachable domain as shown in the second row.

\section{Experimental Results}
This section presents experimental results illustrating the theoretical
results presented in this paper. (1) We evaluate our reachable set computation with various \textit{relaxation} factors to demonstrate the trade-off between complexity and completeness of the reachable set. (2) We compare the performance of our reachability analysis method to other state-of-the-art tools for verification of CNNs. (3) We apply our method to evaluate the robustness of CNNs trained with various algorithms to pixelwise perturbation. (4) We apply our fast analysis method for falsification of neural networks. 

The fast-analysis method is evaluated with a CNN for the CIFAR10 dataset and the VGG16 for the ImageNet dataset. Here, the input set is created with a perturbation on the independent channels of target pixels. Let $C=[c_1,c_2,\dots,c_n]$ denote the channels of the target pixels and $\epsilon$ be the perturbation range. For each channel $c_{i,i\in \{1,2,\dots,n\}}$, a range $[c_i-\epsilon, c_i+\epsilon]$ is created. Thus, we can obtain a $n$-hyperbox set with a face lattice structure which reflects a $L_{\infty}$ norm bounded domain. In this evaluation, we target one pixel with $\epsilon\text{ = }1.0$. The input set is partitioned into several subsets for parallel computation.
The reachable sets are shown in Fig.~\ref{fig:fastreach1}.  
The density and the number of reachable sets increases with the \textit{relaxation} factor, approaching the exact reachable domain as the relaxation factor increases.
We can notice that with \textit{relaxation} = 0.01, the sparse reachable sets can approximate well the outline of their exact reachable domains. 
{Additional experiments are also included to reveal the running time and the number of reachable sets computed increase along with the \textit{relaxation} factor. The fast-analysis method is evaluated by computing reachable sets w.r.t. one pixelwise variation on a CIFAR10 image. Different \textit{relaxation} factors as well as $\epsilon$s which specifies the variation are considered. The computed reachable sets are presented in Figure~\ref{fig:fastreach}, and their number and the running time is shown in Table~\ref{tab:fasteval}. We can notice that our algorithm can compute ~1000 output sets per second, and that the running time and number of output sets increase exponentially with the \textit{relaxation} factor.}

\begin{table*}[ht]
\caption{Summary of the experimental results. The symbol \textbf{SF} stands for SAFE which indicates no misclassification in the input set. The \textbf{US} stands for UNSAFE indicating the existence of misclassification in the input set. The \textbf{UK} stands for UNKNOWN indicating a failed verification returned from the algorithm. The \textbf{TT} stands for the TIMEOUT which is set to one hour for each verification. The \textbf{TIME} stands for the total computation time for the verification of all images (in seconds). The incomplete result of NNV-Exact for $\epsilon$=[0.05,0.10,0.15] is due to the out-of-memory (128~GB).}
\centering
\renewcommand{\arraystretch}{1.3}
\renewcommand{\tabcolsep}{1.3mm}
\resizebox{1.9\columnwidth}{!}{\begin{tabular}{c|rrrrr|rrrrr|rrrrr|rrrrr}
\hline
\multicolumn{1}{c}{$\epsilon$}      & \multicolumn{5}{c}{0.01}     & \multicolumn{5}{c}{0.05}      & \multicolumn{5}{c}{0.10}       & \multicolumn{5}{c}{0.15}   \\ \hline
\textbf{Methods}      & \textbf{SF} & \textbf{US} & \textbf{UK} & \textbf{TT} & \textbf{TIME} & \textbf{SF} & \textbf{US} & \textbf{UK} & \textbf{TT} & \textbf{TIME}  & \textbf{SF} & \textbf{US} & \textbf{UK} & \textbf{TT} & \textbf{TIME}   & \textbf{SF} & \textbf{US} & \textbf{UK} & \textbf{TT} & \textbf{TIME}           \\ 
NNV-Appr     & 66    & 0    & 34    & 0     & 2,303      & 0   & 0   & 98  & 2    & 29,738 & 0   & 0   & 67  & 33   & 193,907 & 0   & 0   & \textless{}67 & \textgreater{}33 & \textless{}193,907      \\ 
NNV-Exact    & 77    &  22   & 0    &  1    &  27,335    & 3    & 5    & 88     & 3     & 131,138         & --    & --    & --    & --     &--       & --    & --    & --     & --     & --                       \\ 
Deepzono    &76      &0     &24     &0      &1,552      & 0   & 0   & 100 & 0    & 2,077  & 0   & 0   & 100 & 0    & 3,079   & 0   & 0   & 100           & 0                & 3,935                   \\ 
Refinezono    & 76    & 0    & 24    &0      & 1,754      &  0   &  0   & 100    &  0    &  2,383     &0     & 0   &100      &0        &3,382     & 0    & 0   & 100      &0            & 4,224   \\
Deeppoly    & 75    & 0    & 25     & 0      & 7,6019     & 5    & 0     & 95    & 0     & 76,720       & 0     &  0   & 100    & 0      & 76,369        & 0    & 0     & 100               &  0                & 75,887    \\
Refinepoly    & 77    & 0    & 23    &  0    & 146,763    & 10    & 0     & 90     & 0      & 186,558       &0     & 0     &100     & 0       & 198,058       & 0     & 0     & 100               & 0                  & 219,365    \\
{\bf Our Method} & 78  & 22  & 0   & 0    & 5,241 & 35  & 65  & 0   & 0    & 7,549  & 13  & 83  & 0   & 4    & 16,135  & 5   & 87  & 0             & 8                & 22,835            \\ \hline
\end{tabular}}
\label{tab:comparison}
\end{table*}

\begin{figure*}[t]
    \centering
    \begin{subfigure}[b]{0.24\textwidth}
        \centering
        \includegraphics[width=\textwidth]{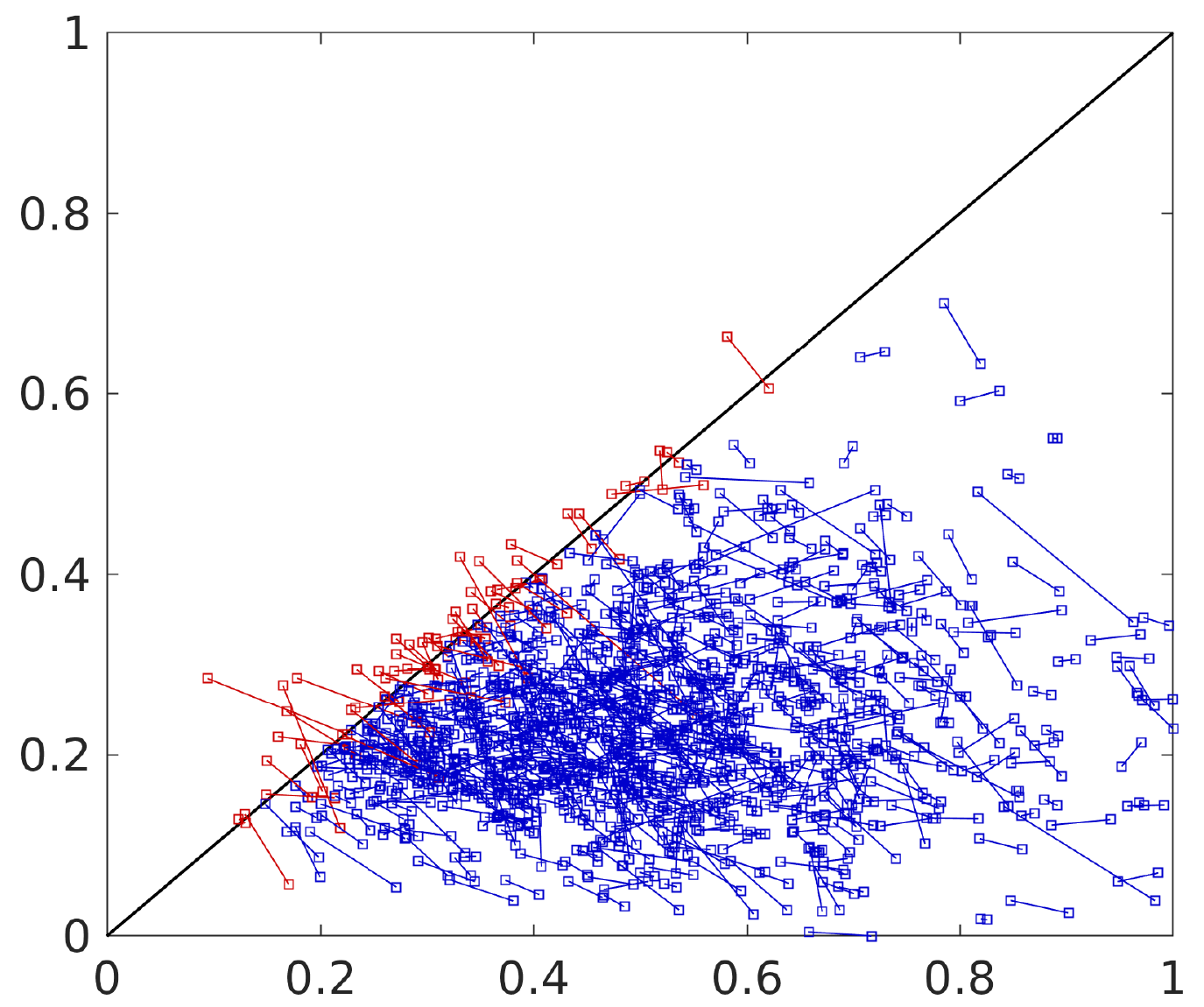}
        \caption{Without Robust Training}
    \end{subfigure}
     \begin{subfigure}[b]{0.24\textwidth}
        \centering
        \includegraphics[width=\textwidth]{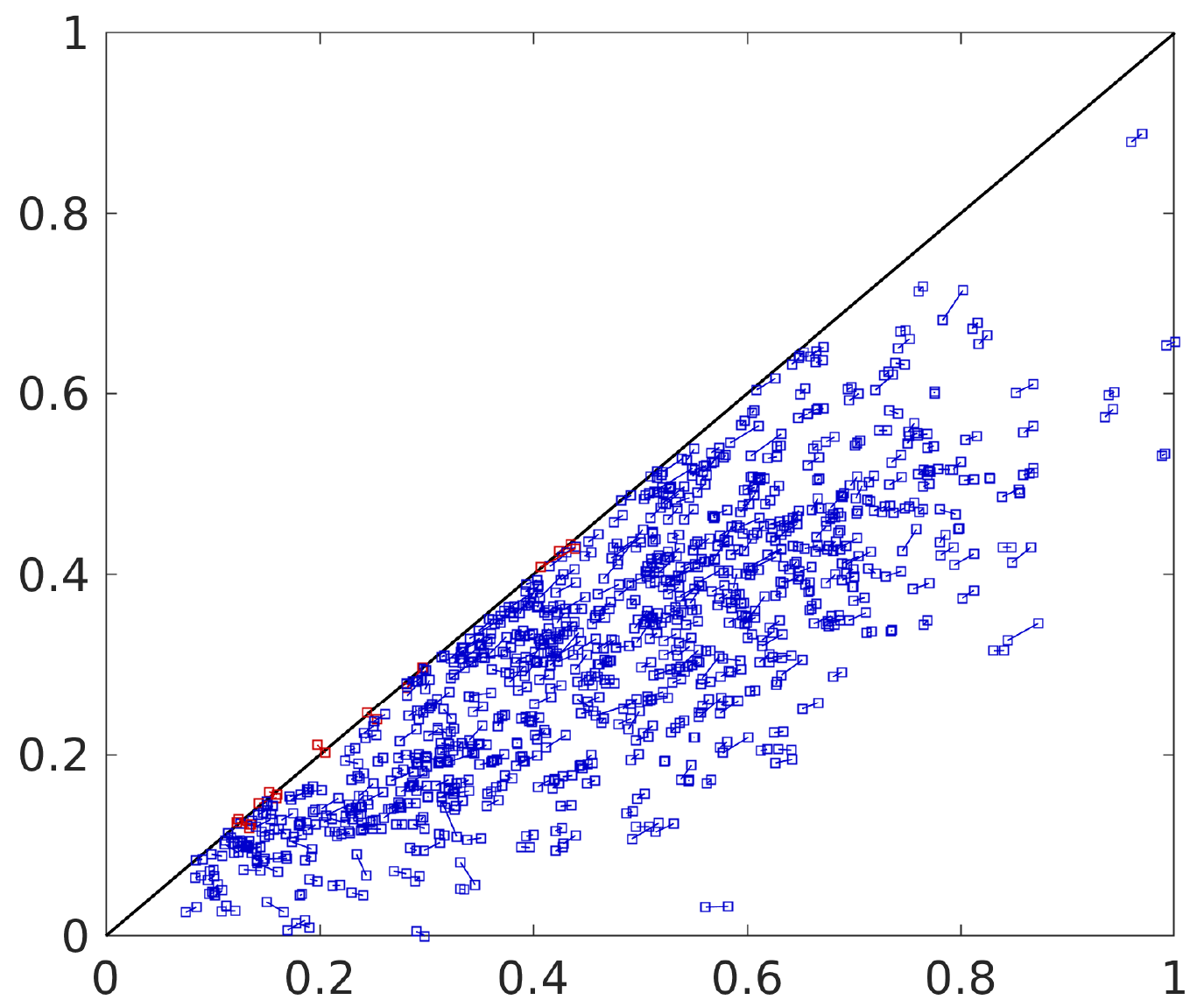}
       \caption{PGD Adversarial}
    \end{subfigure}
     \begin{subfigure}[b]{0.24\textwidth}
        \centering
        \includegraphics[width=\textwidth]{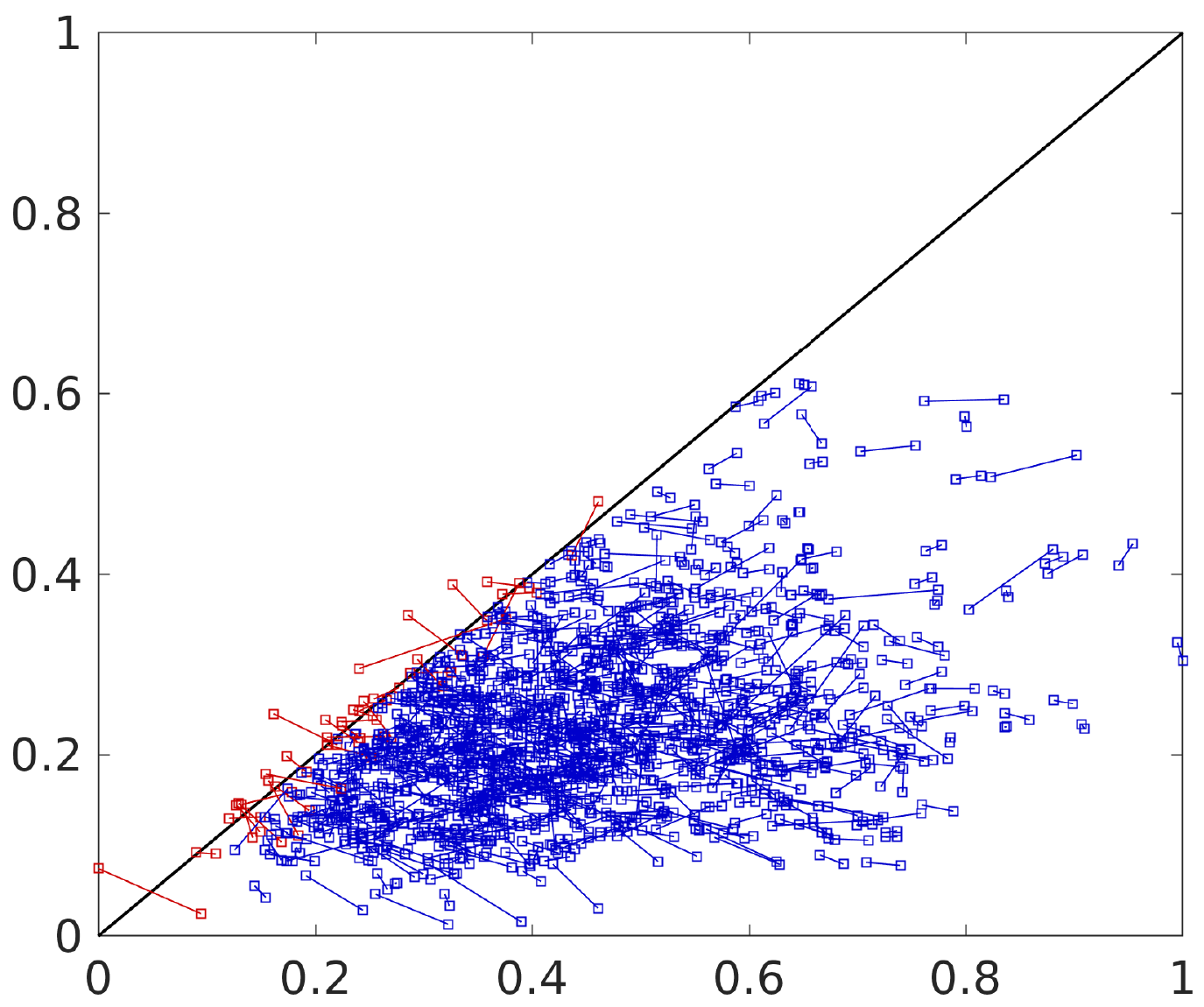}
        \caption{Distillation}
    \end{subfigure}
     \begin{subfigure}[b]{0.24\textwidth}
        \centering
        \includegraphics[width=\textwidth]{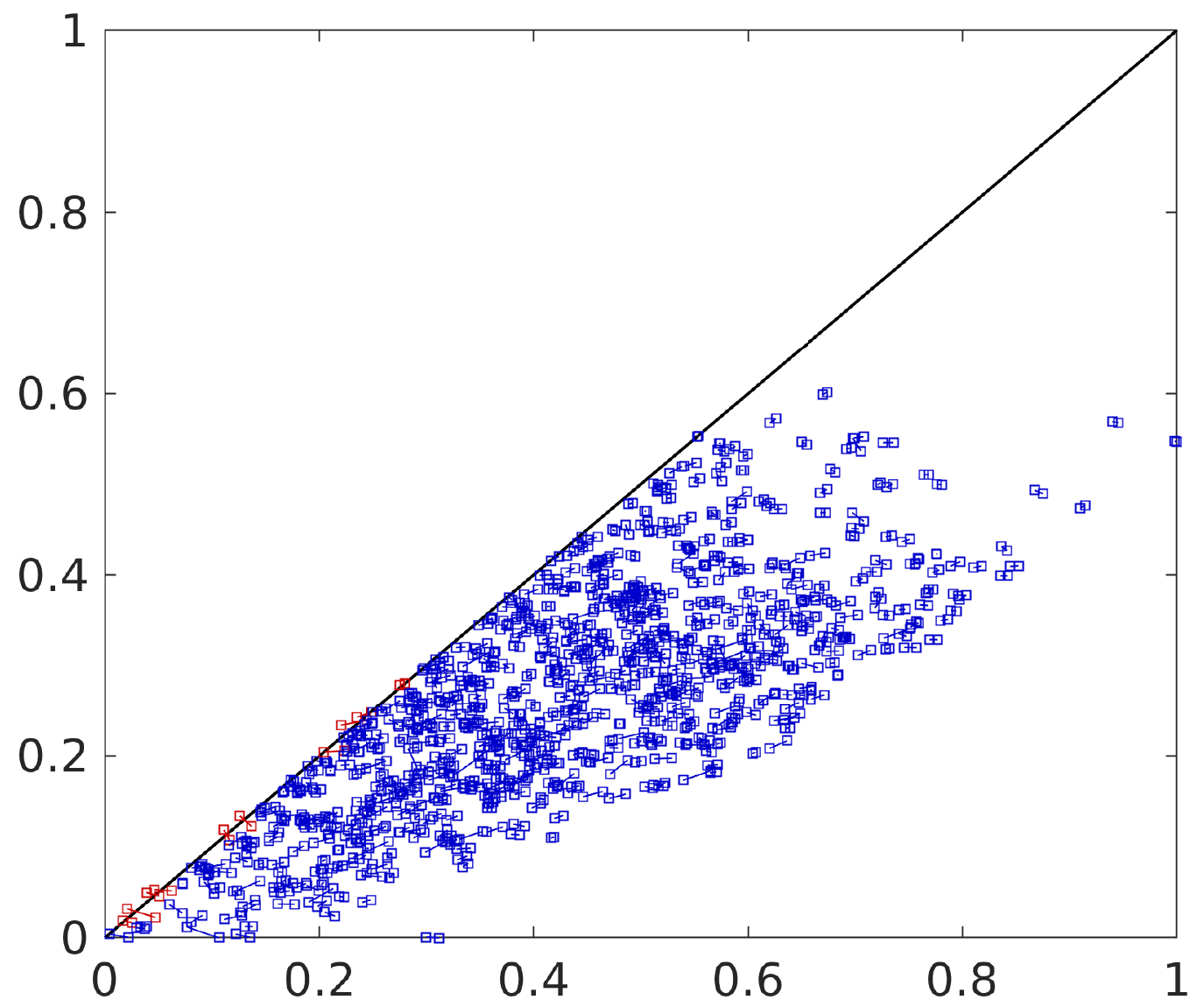}
        \caption{TRADES}
    \end{subfigure}
    \caption{Network robustness evaluation with pixelwise perturbation. The $x$-axis is the output logits of the class \textit{Plane}. The $y$-axis is the second highest logit of the next likely class. The diagonal denotes the decision boundary. The averaged segment lengths of each networks are respectively 0.040, 0.012, 0.034, 0.010. The standard deviations are 0.031, 0.008, 0.025, 0.006. 
    }
    \label{fig:robustness}
\end{figure*}

We evaluate the exact reachability analysis of our approach by comparing it with related state-of-art-methods. Specifically:  Deepzono~\cite{singh2018fast}, Refinezono~\cite{singh2018boosting}, Deeppoly~\cite{singh2019abstract}, Refinepoly~\cite{singh2019beyond} and the NNV tool~\cite{tran2020nnv}. The details of each method are summarized in the supplementary material.
The proposed approach is evaluated on the CIFAR10 network. 
100 images are selected for the test. We apply a perturbation with different $\epsilon$ values to one sensitive  pixel and aim to verify the network. The result is shown in Table~\ref{tab:comparison}. Compared to NNV-Exact, which is also computing the exact reachable sets, our method can quickly verify all images within the timeout limit while the NNV-Exact can only verify the images with $\epsilon=0.01$. Larger $\epsilon$ values will lead to out-of-memory issues. For other over-approximation methods, some of them are faster in running time but mostly fail to verify due to the large conservativeness of the approximated reachable domain. 


We apply our method to analyze the robustness of the CIFAR10 networks trained with different adversarial defense methods. The methods considered are PGD adversarial training~\cite{madry2018towards}, Distillation~\cite{papernot2016distillation} and TRADES~\cite{zhang2019theoretically}. The test images include all the images correctly classified as \textit{Plane} in the test data. For each image, we select the most sensitive pixel and apply reachability analysis to compute the largest output changes w.r.t. a perturbation of $\epsilon$ = 1. 
A reachable set with a large volume indicates that a perturbation can have a large effect on the output classification of the network. On the other hand, a small reachable set volume indicates that the network is robust under perturbation.
This enables us to evaluate the networks' robustness. 
The experimental results over the adversarial training methods are shown in Figure~\ref{fig:robustness}. The figure shows the changes of the output logits before and after the perturbation. The end of the segment that is closer to the boundary denotes the logits after perturbation while the other end denotes the original logits. The segment length reflects the sensitivity. A red segment indicates that an adversarial example is found.
For comparison, logits are normalized into $[0,1]$. 
We notice that networks trained without robust training or with the Distillation method exhibit much larger output variations, compared to the adversarial trained networks with PGD or TRADES. It indicates that the two networks of Figure~\ref{fig:robustness} (a) and (c) are less resistant to the pixelwise perturbation and thus have lower robustness. This supports the results presented in~\cite{athalye2018obfuscated,NEURIPS2020_11f38f8e} where they show that methods such as Distillation which utilize gradient obfuscation give a false sense of robustness. 
This result indicates that our reachability analysis can be useful for a fair evaluation of robustness.

Finally, to evaluate the falsification capability of the fast analysis method, we test it on a CIAFR10 CNN and the VGG16 network. 
Our falsification method changes the input image pixel by pixel until a misclassification is found. 
For each pixel, a fast-reachability computation is conducted with $\epsilon$ = 1 and \textit{relaxation} = 0.01.
For each iteration, we first compute reachable sets for one pixel of the target image, from which we determine the perturbed image which is closer to the decision boundary. 
This perturbed image will be the input for the next iteration. 
The process is repeated until adversarial examples are found or a timeout is reached. 
For the CIFAR10 and VGG16 networks, we attack 400 and 200 images that are correctly classified. 
The results are shown in Fig.~\ref{fig:cifar10_attacks}. 
With different number of pixels allowed to change in each image attack, we compute the number of misclassified images as well as the running time. 
We can notice that the selected CIFAR10 images can be successfully attacked when only a small numbers of pixels is affected. For VGG16, more pixels need to be changed to cause falsification. This is because a single pixel has a lower impact to the image classification due to the larger image size and a deeper network architecture.
The averaged computation time of a single pixel for the CIFAR10 network is $\sim$2 seconds, while for a larger and deeper architecture VGG16 the computation time is $\sim$25 seconds.  

\begin{figure}
    \centering
    \includegraphics[scale=0.36]{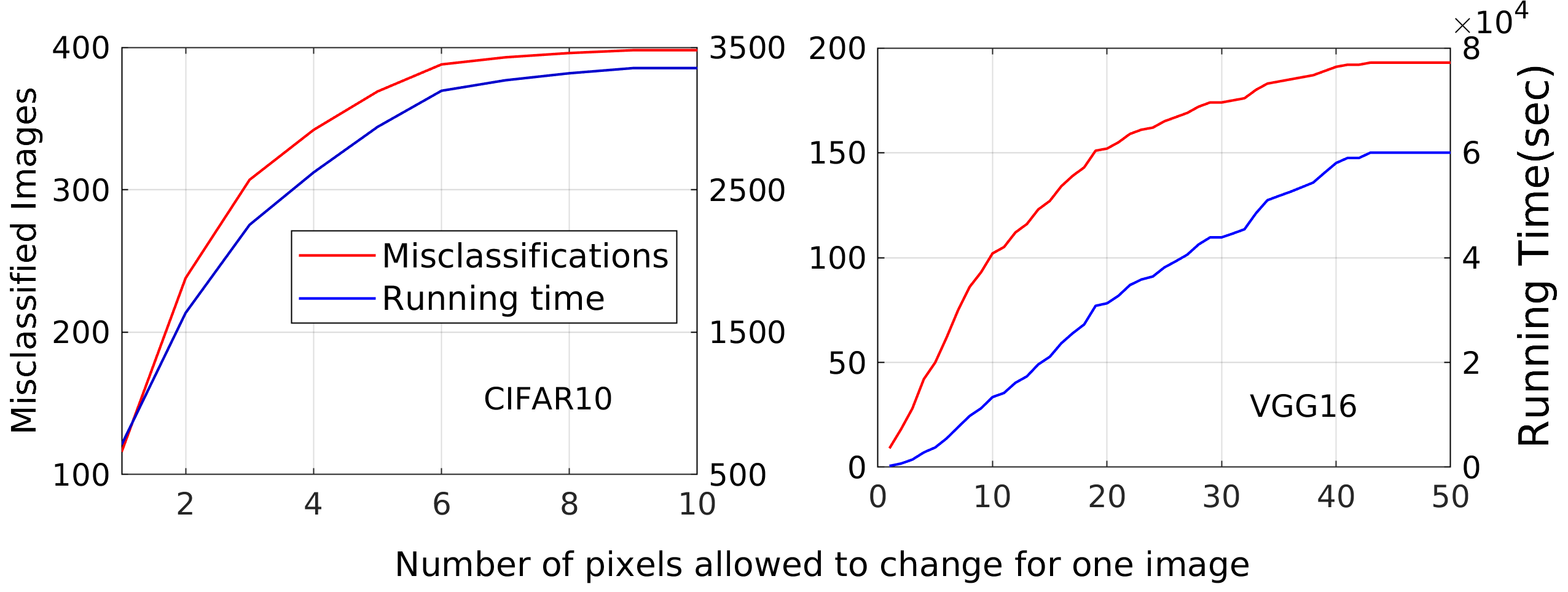}
    \caption{Falsification of CIFAR10 and VGG16 CNNs}
    \label{fig:cifar10_attacks}
\end{figure}

\section{Conclusion}
In this paper, a new approach for computing the reachable sets of CNNs was proposed. 
Our face lattice method could provide an under approximate reachable set in neural network that was faster than complete reachability analysis and wider than usual numerical execution.
We showed the method was applicable for adversarial training as it computes adversarial example domains, and improved the robustness of a vision CNNs efficiently.
We expect the proposed white-box fusion method can disclose neural network mechanism from V\&V perspective since that works black-box nature.
For the future work, we will utilize the proposed method to analyze the semantic segmentation networks for pixelwise classification.

{\small
\bibliographystyle{ieee_fullname}
\bibliography{bibfile}
}

\end{document}